\newtheorem{lemma}{Lemma}
\title{Mind the Gap: Removing the Discretization Gap in Differentiable Logic Gate Networks}
\author{%
  Shakir~Yousefi \\
  ETH Zürich\\
  Switzerland \\
  \texttt{syousefi@ethz.ch}
  \And
  Andreas~Plesner\\
  ETH Zürich\\
  Switzerland \\
  \texttt{aplesner@ethz.ch}
  \And
  Till~Aczel \\
  ETH Zürich\\
  Switzerland \\
  \texttt{taczel@ethz.ch}
  \And
  Roger~Wattenhofer \\
  ETH Zürich\\
  Switzerland \\
  \texttt{wattenhofer@ethz.ch}
}
\begin{document}

\maketitle

\begin{abstract}
    Modern neural networks exhibit state-of-the-art performance on many existing benchmarks, but their high computational requirements and energy usage cause researchers to explore more efficient solutions for real-world deployment.
    Differentiable logic gate networks (DLGNs) learns a large network of logic gates for efficient image classification. However, learning a network that can solve simple problems like CIFAR-10 or CIFAR-100 can take days to weeks to train. Even then, almost half of the neurons remains unused, causing a \emph{discretization gap}. This discretization gap hinders real-world deployment of DLGNs, as the performance drop between training and inference negatively impacts accuracy.
    We inject Gumbel noise with a straight-through estimator during training to significantly speed up training, improve neuron utilization, and decrease the discretization gap. 
    We theoretically show that this results from implicit Hessian regularization, which improves the convergence properties of DLGNs. We train networks $4.5 \times$ faster in wall-clock time, reduce the discretization gap by 98\%, and reduce the number of unused gates by 100\%.
\end{abstract}

\section{Introduction}

\begin{wrapfigure}{r}{0.45\textwidth}
    \centering
    \vspace{-5mm}
    \includegraphics[width=\linewidth]{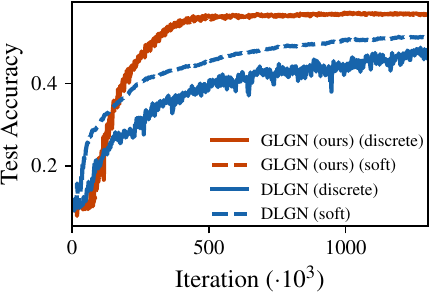}
    \caption{CIFAR-10 test accuracy comparison. Solid and dashed lines show discrete and soft performance, respectively. Gumbel LGNs (GLGN, red) demonstrate faster convergence and minimal discretization gap compared to DLGNs (DLGN, blue).}
    \label{fig: frontpage test accuracy}
\end{wrapfigure}

Deep neural networks have reached human-level performance across a wide array of tasks \citep{heDelvingDeepRectifiers2015,plesner2024breaking}. 
However, these advances come at the cost of immense computational demands during training and inference, limiting their deployment in many real-world environments \citep{hanDeepCompressionCompressing2016a,strubellEnergyPolicyConsiderations2020,thompsonComputationalLimitsDeep2022,lambert_tulu_2025}. This has sparked growing interest in designing models that retain competitive accuracy while being more efficient \citep{NEURIPS2022_0d3496dd,susskindWeightlessNeuralNetworks2023,NEURIPS2024_db988b08,andronicNeuraLUTAssembleHardwareAwareAssembling2025,bacellarDifferentiableWeightlessNeural2025}.

At their core, all computations on digital hardware reduce to Boolean operations such as AND, OR,  and NOT. 
This motivates the question: \emph{Can we express and execute machine learning models directly in the native language of hardware - namely, logic gates?}

One approach is logic gate networks (LGNs), which replaces arithmetic computations with compositions of discrete logical operations, thereby enabling efficient inference. 
While inference with LGNs is efficient, training them poses significant challenges. 
To address this, differentiable logic gate networks (DLGNs) \citep{NEURIPS2022_0d3496dd, NEURIPS2024_db988b08} introduce continuous relaxations of both logical operations and gate selections, allowing the use of gradient-based optimization methods.

Despite their potential, we identify and propose solutions to two major challenges.
\textbf{(1) Discretization gap}: DLGNs rely on continuous relaxations during training, but final models must be discretized for inference. This mismatch may result in notable degradation, reducing model accuracy by $3\%$ on the same data. \textbf{(2) Slow convergence}: While LGNs are efficient at inference time, training DLGNs remains slow due to the reliance on differentiable relaxations, making DLGNs converge substantially slower than standard neural networks.

These challenges are interrelated. 
The gap arises because the final parameters, after training, must be discretized.
Small parameter perturbations can significantly change performance if the loss landscape is sharp \citep{springerOvertrainedLanguageModels2025}. A sharp loss landscape can also cause poor gradient signals, which impact the convergence speed, causing training to take much longer, while 
a smooth loss landscape can reduce the discretization gap and speed up convergence \citep{hochreiterFlatMinima1997,keskarLargeBatchTrainingDeep2017,foretSharpnessAwareMinimizationEfficiently2021,chenStabilizingDifferentiableArchitecture2021}. 
Our central hypothesis is that \emph{smoother loss landscapes} make DLGN models more robust to discretization and facilitate faster and more stable training. Since the loss landscape is smoother, the gradient signal is better, thus making the networks converge faster. At the same time, the improved gradient signal also causes more neurons to collapse, thus reducing the impact of discretization.

\begin{figure}[t]
    \centering
    \includegraphics[width=\linewidth]{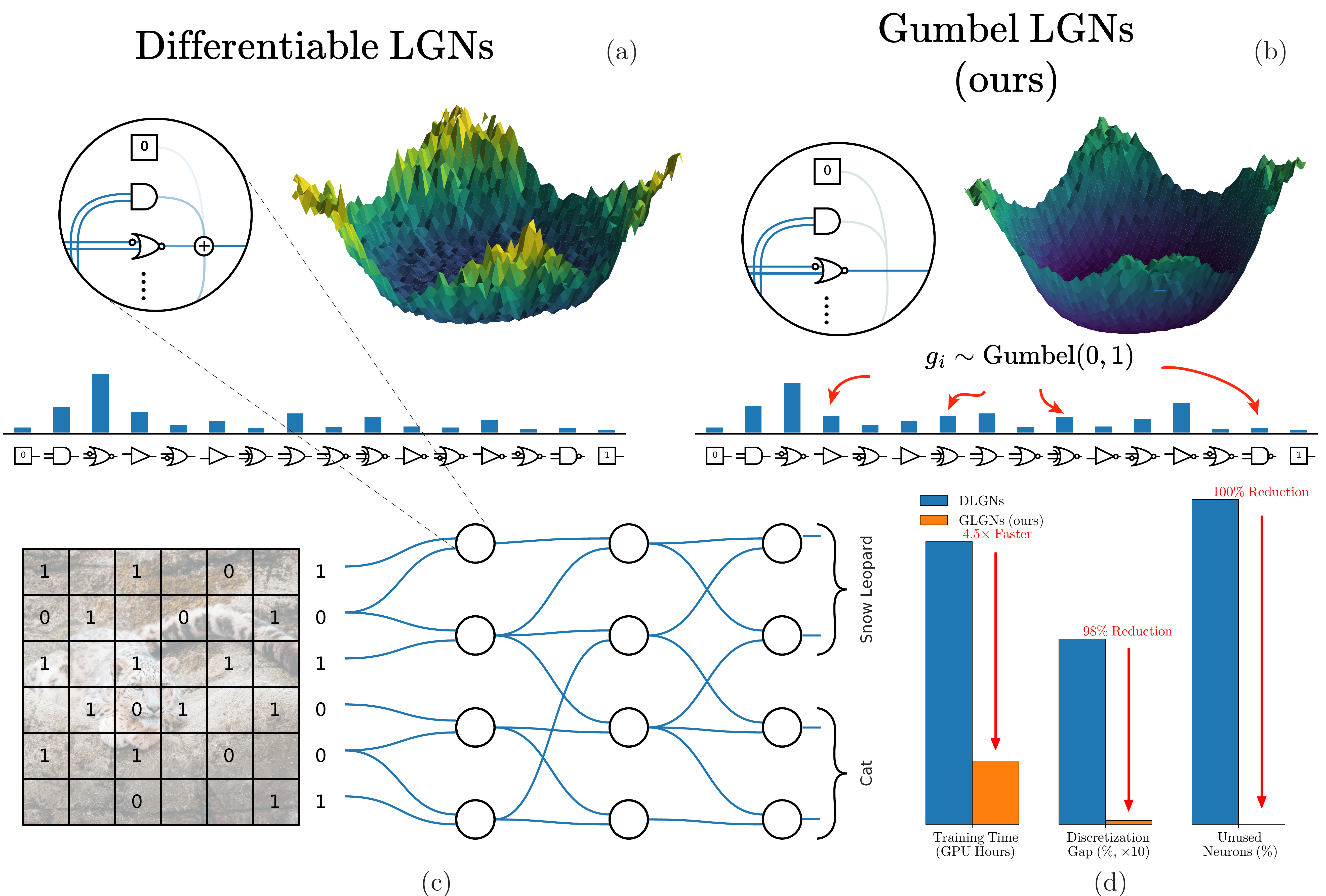}
    \caption{Overview figure. \textbf{(a)} DLGNs: Leftmost shows the internal structure of a node. We parameterize each node by weighing the 16 possible logic gates (shown below) and summing their output. This results in a hard, brittle, loss landscape, slowing convergence and increasing the discretization gap. \textbf{(b)} Gumbel LGNs: during training, we inject Gumbel noise on the 16 gate weights and select the logic gate with the highest weight. This results in a smoother loss landscape and aligns the training with the network at inference. This results in faster training and reduces the discretization gap. \textbf{(c)} Structure of DLGNs and Gumbel LGNs. Each neuron receives two inputs. The nodes in the final layer are aggregated by summation, thus producing class likelihoods. \textbf{(d)} Results of using Gumbel LGNs instead of DLGNs. We achieve up to $4.5\times$ faster convergence (in wall-clock time), a 98\% reduction in discretization gap, and 100\% reduction of unused neurons.}
    \label{fig: overview}
\end{figure}

We propose Gumbel Logic Gate Networks (Gumbel LGNs), which inject Gumbel noise into the gate selection process using the Gumbel-Softmax trick. 
By injecting Gumbel noise into the gate selection process, we introduce stochasticity that flattens the loss landscape, improving the optimization process and reducing discretization sensitivity.
Empirically, we find that Gumbel LGNs exhibit faster convergence and smaller discretization gaps than baseline DLGNs.

To further reduce the discretization gap, we explore a training strategy inspired by techniques from discretization-aware training and neural architecture search (NAS): using continuous relaxations only in the backward pass, while enforcing discrete gates in the forward pass.  Although ST estimator may slightly slow convergence, it substantially reduces the discretization gap. Moreover, it aligns training dynamics with inference-time behavior, and as it only influences training, there is no impact on inference.

To our knowledge, no prior work analyzes the discretization gap in DLGNs, and draw formal connections to the smoothness of the loss landscape. 
NAS and DARTS typically studies explored search spaces up to size $10^{18}$, while our experiments operate over vastly larger parameter spaces, exceeding $10^{3{,}600{,}000}$, and show that NAS techniques scale to this setting.

Our contributions are as follows:
\begin{itemize}
    \item \textbf{Empirical validation.} We design and execute experiments showing that Gumbel LGNs both speed up training and boost neuron utilization.
    \item \textbf{Theoretical analysis.} We prove that injecting Gumbel noise into DLGNs smooths their loss landscape by regularizing the Hessian’s trace, thereby reducing the discretization gap and accelerating convergence.
    \item \textbf{Practical algorithmic insight.} We demonstrate that employing the straight-through estimator further closes the discretization gap in practice.
\end{itemize}


\section{Background}


\paragraph{Logic Gate Networks} 
Logic Gate Networks (LGNs) represent an entire network as a composition of discrete logic operations, such as AND, OR, and XOR. LGNs consist of several layers, where each layer contains $n$ \emph{fixed} logic gates, referred to as neurons, that work in parallel. $n$ is the width of the layer. Each neuron in a layer takes as input the output of two (random) neurons in the previous layer. 
We use the GroupSum operation to get class scores as done by \citet{NEURIPS2022_0d3496dd}. For classifying the input into $k$ classes, the neurons in the final layer are grouped into disjoint groups with one group per class. The outputs of the neurons in group $i$ are summed to give the likelihood of class $i$. Mathematically this is expressed as $s_i = \frac{1}{\tau^{\mathrm{GS}}}\sum_{j \in G_i} a_j$, where $G_i$ is the $i$'th group, $a_j$ is the output of neuron $j$, and $\tau^{\mathrm{GS}}$ is a temperature parameter. We then predict the class with the highest score. For more details, see \citep{miottiDifferentiableLogicCA2025}.



\paragraph{Differentiable Training} 
Directly searching for the best discrete gate assignments is infeasible due to the size of the search space, so Differentiable Logic Gate Networks (DLGNs) \citep{NEURIPS2022_0d3496dd, NEURIPS2024_db988b08} introduce a continuous relaxation. 
Each of the 16 possible binary gates $h_i(a,b)$ is replaced by a continuous surrogate (e.g.\ AND$(a,b)\mapsto a\cdot b$).
In addition, each neuron maintains logits $\mathbf{z}\in\mathbb{R}^{16}$, which are initialized using a Gaussian, \(\mathbf{z} \sim \mathcal{N}(0,1)^{16}  \). 
After a softmax, these logits define a probability distribution over gates, and the neuron’s output is a weighted sum of the 16 gates:
\begin{align}
    f_{\mathbf{z}}^\mathrm{soft}(a, b) =  \sum_{i = 1}^{16} \frac{\exp z_i}{\sum_j \exp z_j} \cdot h_i(a, b).\label{eq:lgn train mode}
\end{align}
Ensuring each logic gate maps from a continuous domain $f: \left[0,1 \right]^2 \to \left[0,1 \right]$.
This ``soft'' network can be trained end‐to‐end with gradient descent.

\paragraph{Discretization}  After training, DLGNs are discretized to LGNs by selecting the logic gate with the highest logit value, i.e., it uses  $h_i,i=\arg\max_i z_i$. 
We denote DLGNs evaluated in the differentiable setting (using \Cref{eq:lgn train mode}) as \emph{soft} and otherwise as \emph{discrete}.

\paragraph{Gumbel-Softmax}
The Gumbel-Softmax trick offers an efficient and effective way to draw samples from a categorical distribution with class probabilities $\pi\in\Delta^k$ \citep{gumbelStatisticalTheoryExtreme1955,maddisonSampling2014,jangCategoricalReparameterizationGumbelSoftmax2017,maddison2017concrete}. Let $g \sim \mathrm{Gumbel}(0,1)$ distribution if $u\sim U(0,1)$ and $g=-\log(-\log u )$. We can then draw a sample $z$ from $\pi$ as the value for index $i$ given by \Cref{eq: gumbel argmax}.
\begin{align}
    i = \arg\max_j(g_j+\log \pi_j),\quad g_j\sim \mathrm{Gumbel}(0,1).\label{eq: gumbel argmax}
\end{align} 
We can make the argmax operation continuous and differentiable with respect to the class probabilities $\pi_i$, and generate $k$-dimensional sample vectors $y\in\mathbb{R}^k$ using the Softmax with temperature $\tau$ as below: 
\begin{align}
    \pi_i^{\mathrm{Gumbel}} = \frac{\exp((\log\pi_i + g_i)/\tau)}{\sum_{j}\exp((\log\pi_j + g_j)/\tau)}, \quad \pi_i=\sum_{i = 1}^{k} \frac{\exp z_i}{\sum_j \exp z_j},\quad z_i\in\mathbb{R}.\label{eq: gumbel softmax}
\end{align}

\section{Related Work}


\paragraph{Efficient Neural Architectures}
A significant body of research has focused on designing neural models that maintain high performance while operating within limited computational budgets, e.g., for deployment on edge devices \citep{parkLiReDLightWeightRealTime2018,liuEfficientNeuralNetworks2021,al-quraanEdgeNativeIntelligence6G2023,mishraDesigningTrainingLightweight2024,zhangOnceNASDiscoveringEfficient2024,iqbalLDMResNetLightweightNeural2024}.
These light models use various methods such as lookup tables \citep{chatterjeeLearningMemorization2018}, binary and quantized neural networks \citep{zhangFracBNNAccurateFPGAEfficient2021,liEqualBitsEnforcing2022,frantar2022gptq,yuanComprehensiveReviewBinary2023}, and sparse neural networks \citep{hoefler2021sparsity,sun2023simple,he2023structured,frantar2023sparsegpt,cheng2024survey}.

Of particular interest in this context are differentiable logic gate networks (DLGNs), which have demonstrated state-of-the-art performance in image classification tasks \citep{NEURIPS2022_0d3496dd, NEURIPS2024_db988b08}, as well as in rule extraction from observed cellular automata dynamics \citep{miottiDifferentiableLogicCA2025}. Our proposed improvements target convergence behavior and are orthogonal to the architectural innovations of the convolutional DLGN variant \citep{NEURIPS2024_db988b08}; hence, we expect them to be transferable without loss of generality.
We refrain from comparisons to other efficient neural models, as such benchmarks were already comprehensively addressed in the works mentioned above by \citet{NEURIPS2022_0d3496dd, NEURIPS2024_db988b08}.

\citet{kimDeepStochasticLogic2023} extend DLGNs by modifying the neuron parameterization introduced in \citet{NEURIPS2022_0d3496dd}. Specifically, they normalize the logits and remove the temperature parameter \( \tau \). In addition, they employ a straight-through (ST) estimator \citep{bengioEstimatingPropagatingGradients2013a} and injectGumbel noise with a learnable scale parameter. However, \citet{kimDeepStochasticLogic2023} do not analyze the discretization gap or study the convergence behavior of DLGNs. Moreover, they draw no formal connection between Gumbel noise and its role as an implicit Hessian regularizer or loss smoothener.  

\paragraph{Differentiable Neural Architecture Search}
Neural Architecture Search (NAS) aims to automate the selection of high-performing model architectures from a large design space \citep{zophNeuralArchitectureSearch2017, elsken2019neural, renComprehensiveSurveyNeural2021}. 
While early approaches were computationally expensive, subsequent efforts have focused on improving efficiency \citep{dongSearchingRobustNeural2019, xieSNASStochasticNeural2020}. Several works have addressed the issue of train–test performance discrepancies by proposing sampling-based training \citep{changDATADifferentiableArchiTecture2019} or regularization techniques that bias architecture selection toward configurations with better generalization \citep{chuFairDARTSEliminating2020}.

A seminal contribution in this domain is Differentiable Architecture Search (DARTS) by \citet{liuDARTSDifferentiableArchitecture2019}, which introduces a softmax-based relaxation over discrete architectural choices, allowing end-to-end optimization through gradient descent. 
This principle strongly resonates with the soft gate selection mechanism employed in DLGNs.

More recently, \citet{chenStabilizingDifferentiableArchitecture2021} reduced the discretization gap of DARTS by introducing Smooth DARTS, which uses weight perturbations through uniform noise or adversarial optimization. These were shown to bias the optimization toward solutions with flatter minima and lower Hessian norm.  This technique often referred to as \emph{curvature regularization} reduces sensitivity to sharp local optima and enhances generalization.

\paragraph{Differentiable LGNs as DARTS}
The works on DLGNs by \citet{NEURIPS2022_0d3496dd, NEURIPS2024_db988b08} do not explicitly draw connections to NAS, but the conceptual similarity is high. 
Both LGNs and DARTS use softmax-based weighting to choose between multiple candidate functions in a differentiable manner.
A key distinction lies in the scale of the search space. 
Conventional NAS approaches typically explore search space sizes up to $10^{18}$ \citep{zelaNASBench1Shot1BenchmarkingDissecting2019, tuNASBench360BenchmarkingNeural2022, chitty-venkataNeuralArchitectureSearch2023},while LGNs operate over exponentially larger spaces—$16^{6 \cdot 64,000} \approx 10^{462,382}$ for MNIST and $\approx 10^{3,699,056}$ for CIFAR-10. 
This scale is enabled by the simplicity of logic operations, which have no learnable parameters. 
Thus, DLGNs demonstrate the viability of DARTS at previously unexplored scales.

Conventional NAS frameworks often permit a retraining phase after discretizing the architecture, thereby reducing the discretization gap. 
LGNs, in contrast, lack such flexibility, as their neurons contain no parameterized operations, and thus the gap persists.

Moreover, in DARTS and related approaches, this training approach favors operations, such as residual connections \citep{tianDiscretizationawareArchitectureSearch2021}. 
Typically, we aim to avoid these residual connections, as they do not increase the models' expressive power \citep{chuFairDARTSEliminating2020}.
In relation, \citet{NEURIPS2024_db988b08} finds that their convolutional method with residual initialization mainly converges to residual connections.

\paragraph{Sharpness-Aware Minimization}
A parallel line of research focuses on improving generalization by minimizing the sharpness of the loss landscape. Motivated by prior theoretical works on generalization and flat minima \citep{keskarLargeBatchTrainingDeep2017, dziugaiteComputingNonvacuousGeneralization2017, jiang*FantasticGeneralizationMeasures2019}, \citet{foretSharpnessAwareMinimizationEfficiently2021} introduced Sharpness-Aware Minimization (SAM) in \Cref{eq: sam}. This technique explicitly seeks flat minima by optimizing the worst-case loss within a perturbation neighborhood. 
\begin{align}
    \min_{\boldsymbol{w}} L^{SAM}_S(\boldsymbol{w}) + \lambda\|\boldsymbol{w}\|^2_2\quad \mathrm{where}\quad L^{SAM}_S(\boldsymbol{w}) \triangleq \max_{\|\boldsymbol{\epsilon}|_p\leq \rho} L_S(\boldsymbol{w} + \boldsymbol{\epsilon}),\label{eq: sam}
\end{align}
where $L_\mathcal{S}(\boldsymbol{w})$ is a loss function over a training set $\mathcal{S}$ of training samples evaluated for model parameters $\boldsymbol{w}$. $p\in[1,\infty[$ is the $p$-norm used (usually $p=2$) and $\rho>0$ is a hyperparameter \citep{foretSharpnessAwareMinimizationEfficiently2021}.
Since its introduction, SAM has inspired numerous follow-up studies focused on improving computational efficiency \citep{liuEfficientScalableSharpnessAware2022, duEfficientSharpnessawareMinimization2022, NEURIPS2022_948b1c9d, NEURIPS2022_c859b99b, NEURIPS2022_9b79416c, muellerNormalizationLayersAre2023} as well as providing theoretical insights into its efficacy \citep{andriushchenkoUnderstandingSharpnessAwareMinimization2022, wangSharpnessAwareGradientMatching2023, wenHowDoesSharpnessAware2023, liFriendlySharpnessAwareMinimization2024}. We refer to \Cref{appendix: extended sam} for a detailed description of SAM. 

\section{Gumbel Logic Gate Networks}

We introduce \emph{Gumbel Logic Gate Networks} (Gumbel LGNs), which employ discrete sampling of logic gates via the Gumbel-Softmax trick \citep{jangCategoricalReparameterizationGumbelSoftmax2017,maddison2017concrete} with a straight-through (ST) estimator \citep{bengioEstimatingPropagatingGradients2013a}. 
While conventional DLGNs maintain a convex combination of gates throughout training and prune to hard selections at inference time, Gumbel LGNs resemble inference-time behavior directly during training by stochastically selecting individual gates per forward pass.
We perturb the gate logits with Gumbel noise and select the most probable gate, following the argmax operation (cf. \Cref{eq: gumbel argmax}). During backpropagation, the non-differentiable argmax is approximated using the Gumbel-Softmax (\Cref{eq: gumbel softmax}), enabling end-to-end training.

This approach is motivated by two observations:
(1) Implicit smoothening via noise: injecting Gumbel noise during the forward pass introduces a form of stochastic smoothing, effectively averaging over local perturbations of the loss surface. 
As we show, this process approximates a curvature-penalizing loss that favors flatter minima and smaller Hessian norm. In addition, this is known to correlate with improved generalization \citep{foretSharpnessAwareMinimizationEfficiently2021,chenStabilizingDifferentiableArchitecture2021}.
(2) Inference-time alignment: In DLGNs, the training objective is misaligned with inference behavior, as training relies on weighted combinations of gates that are ultimately discarded. This discrepancy harms generalization. In contrast, Gumbel LGNs train under the same discrete selection mechanism, which is used at inference.


\paragraph{Training Gumbel LGNs}
As with DLGNs, we model each neuron as a distribution over binary, relaxed logic gates \( \mathcal{S} = \{ h_1, h_2, \dots, h_{16} \} \), where each gate \( h_i: [0,1]^2 \rightarrow [0,1] \) operates on relaxed Boolean inputs. We associate each gate $i$ with logit $z_i$, and the gate has weight $\pi_i^{Gumbel}$ from \Cref{eq: gumbel softmax}. The output of the neuron with inputs $(a,b)$ is then given below.
\begin{align}
    f^\mathrm{soft}_{\boldsymbol{\mathbf{z}}}(a, b) = \sum_{i=1}^{16} \frac{\exp((\log\pi_i + g_i)/\tau)}{\sum_{j}\exp((\log\pi_j + g_j)/\tau)} \cdot h_i(a, b)= \sum_{i=1}^{16} \pi_i^{\mathrm{Gumbel}} \cdot h_i(a, b), \label{eq: gumbel soft}
\end{align}

where \( \tau > 0 \) is a temperature parameter controlling the sharpness of the distribution. As \( \tau \to 0 \), the distribution increasingly peaks around the maximum-logit index \citep{jangCategoricalReparameterizationGumbelSoftmax2017}. This relaxation enables end-to-end differentiability while encouraging the network to commit to discrete logic gates during training.

We employ a straight-through (ST) estimator to bridge the discretization gap between the continuous relaxation used during training and the hard decisions required during inference. 
In this formulation, each neuron selects a single logic gate in the forward pass via a hard (non-differentiable) choice, while gradients are estimated through a soft relaxation in the backward pass. See \Cref{appendix: training glgns} for pseudo-code implementation of the training process. Concretely, during the forward pass, we sample Gumbel noise \( \mathbf{g} \sim \mathrm{Gumbel}(0,1)^{16} \) and compute:
\begin{align}
    f^{\mathrm{discrete}}_{\mathbf{z}}(a, b) = h_k(a, b)\label{eq: gumbel hard}
\end{align}
using the gate \( h_k \) with maximum perturbed logit.
During the backward pass, we use the soft Gumbel-Softmax relaxation (\Cref{eq: gumbel soft})
to compute gradients, effectively treating the hard output as if it were differentiable:
\[
    \frac{\partial f^{\mathrm{discrete}}_{\mathbf{z}}}{\partial z_i} \coloneqq \frac{\partial f^\mathrm{soft}_{\mathbf{z}}}{\partial z_i}.
\]

This ST estimator mechanism encourages the network to make discrete decisions and allows end-to-end optimization via backpropagation. See \Cref{fig: overview} or \Cref{fig: GLGN diagram} in \Cref{appendix: lgn diagram} for visualizations.

\paragraph{Implicit Gap Reduction via Gumbel Smoothing.}\label{sec: theoretical results}
We present a theoretical result that supports the use of Gumbel perturbations during training. Consider a loss function \( \mathcal{L}\), logits \( \mathbf{z} \in \mathbb{R}^{16}\), and \( \mathbf{g} \) with i.i.d entries \(g_i \sim \mathrm{Gumbel}(0,1) \). Adding Gumbel noise with \(\tau \in \mathbb{R} \) to the logits can be seen as Monte-Carlo sample of the objective \( J(\mathbf{z}) \);
\[ J(\mathbf{z}) = \mathbb{E} \left[ \mathcal{L}(\mathrm{softmax}( ( \mathbf{z} + \mathbf{g} )/ \tau ) \right].  \] This can be interpreted as a form of stochastic smoothing. This gives us the following lemma:

\begin{lemma}[Gumbel‐Smoothing]\label{lemma: gumbel smoothing}
Let \(\mathcal L: \mathbb{R}^{16} \to \ \mathbb{R} \) be twice continuously differentiable (with Lipschitz Hessian), and let
\( \mathbf{z} \in \mathbb{R}^{16}, \mathbf{g} \sim\mathrm{Gumbel}(0,1)^{16}\).  Consider
\begin{align*}
    J(\mathbf{z}) = \mathbb{E} \left[ \mathcal{L}(\mathrm{softmax}( ( \mathbf{z} + \mathbf{g} )/ \tau ) \right]
\end{align*}
and set \( \mathbf{a}  = \mathbf{z} / \tau \) and \( f(\mathbf{a}) = \mathcal{L}(\mathrm{softmax}(\mathbf{a})) \). We then get the expression below.
\begin{align*}
    J(\mathbf{z}) = \mathcal{L}(\mathrm{softmax}( \mathbf{z} / \tau) ) + \frac{\pi^2}{12 \tau^2} \mathrm{tr}(H_{f}( \mathbf{z} / \tau ))
+ O\left(\tau^{-3}\right).
\end{align*}
\end{lemma}

\begin{proof}
    See \Cref{appendix: gumbel-smooth}.
\end{proof}

Intuitively, by injecting Gumbel noise during training, we encourage the optimizer to find parameters that are robust to small perturbations. This results in flatter loss landscapes and reduces the sensitivity to parameter discretization when switching to inference mode \citep{springerOvertrainedLanguageModels2025}. The expected loss scales with
\[
\frac{\pi^2}{12 \tau^2} \mathrm{tr}( H_{f}( \mathbf{z} / \tau)).
\]
As the temperature \( \tau \) \textbf{increases}, the coefficient \( 1/\tau^2 \) \textbf{decreases}, reducing the degree of implicit smoothing. We illustrate this with two representative choices:
\begin{itemize}
    \item \textbf{Small} \( \tau\) (e.g. \(0.1\)) \( \implies 1/\tau^2\) is large \( \implies \) large smoothing,  flat minima.
    \item \textbf{Large} \( \tau\) (e.g. \(2.0 \)) \( \implies 1/\tau^2\) is small \( \implies \) almost no smoothing.
\end{itemize}

As a result, adjusting the temperature \(\tau\) offers a mechanism to control the strength of this curvature-aware regularization, the convergence of the model, and to reduce the discretization gap \emph{implicitly}.

\section{Empirical Evaluations}


Our empirical evaluations focus on CIFAR-10 and CIFAR-100. Due to constrained resources, we limit experiments by default to 48 GPU hours.
To ensure a fair comparison, we use the hyperparameters from \citep{NEURIPS2022_0d3496dd} whenever possible rather than tuning the parameters, such as learning rate, ourselves. \Cref{appendix: hyperparameters} contains all the  default parameters. 
 
\paragraph{Discretization Gap}

\Cref{fig: cifar10 results plots} shows the test accuracy as a function of training iteration for an LGN of depth 12 and width $256$k on CIFAR-10; these are the default parameters unless stated otherwise. To quantify the discretization gap, we take the absolute difference between the discretized and soft network accuracy as shown on the right in \Cref{fig: cifar10 results plots}. Gumbel LGN converges much faster than the DLGNs, with virtually no discretization gap. 
We also measure the runtime and see that Gumbel increases the runtime by roughly $5\%$ per iteration (cf. \Cref{appendix: runtime}). 
Combined with runtime results from \Cref{tab: timings} in \Cref{appendix: runtime}, Gumbel LGN converges\footnote{For this, we match Gumbel LGNs' discrete accuracy with  DLGNs' maximum discrete accuracy.} $4.75\times$ faster in iterations, making Gumbel LGNs $4.5\times$ faster in wall-clock time to train. Note that the Differentiable LGN still improves after 48 hours.
\begin{figure}[t]
    \centering
    \includegraphics[width=\linewidth]{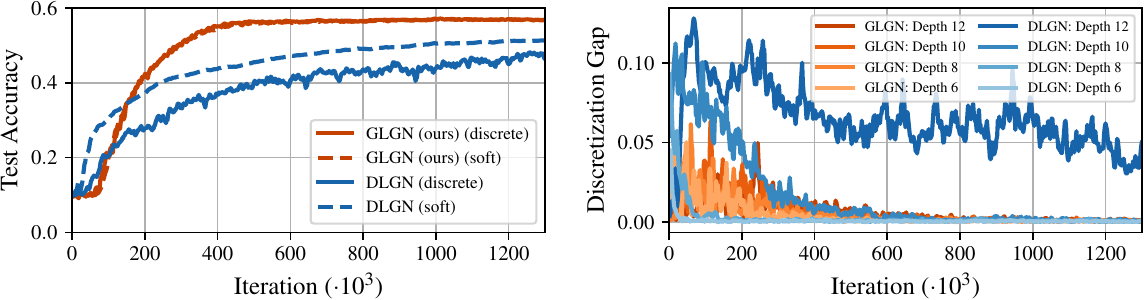}
    \caption{Performance of Gumbel LGNs and DLGNs on CIFAR-10. (\textbf{Left)} Test accuracy (width of \(256\)k, depth of 12). (\textbf{Right)} Discretization gap for various depths. DLGNs experience larger gaps and slower reduction as the depth increases. In contrast, Gumbel LGNs have consistently low gaps and fast reduction as the network depth increases.}
    \label{fig: cifar10 results plots}
\end{figure}

We perform an additional evaluation on CIFAR-100 using the same settings. On Figure \ref{fig: cifar100 results plots}, we see that GLGNs already converge at around 400K iterations, whereas DLGNs do not converge. This is consistent with the results, we observe for CIFAR-10.

\paragraph{Gap Scaling with Depth} 
On the right of \Cref{fig: cifar10 results plots}, we see the discretization gap for models of various depths for DLGNs and Gumbel LGNs.
As the model depth increases, the expressive power of the networks theoretically increases.
The DLGNs experience bigger discretization gaps as the depth increases, while our Gumbel LGNs are stable across depths. Hence, Gumbel LGNs do not struggle to converge as the networks are made deeper. Both methods experience a shift in when the maximum gap occurs.
This is expected since increasing the model size usually delays when the accuracy plateaus, i.e., more iterations are needed to converge. 

\begin{figure}[t]
    \centering
    \includegraphics[width=\linewidth]{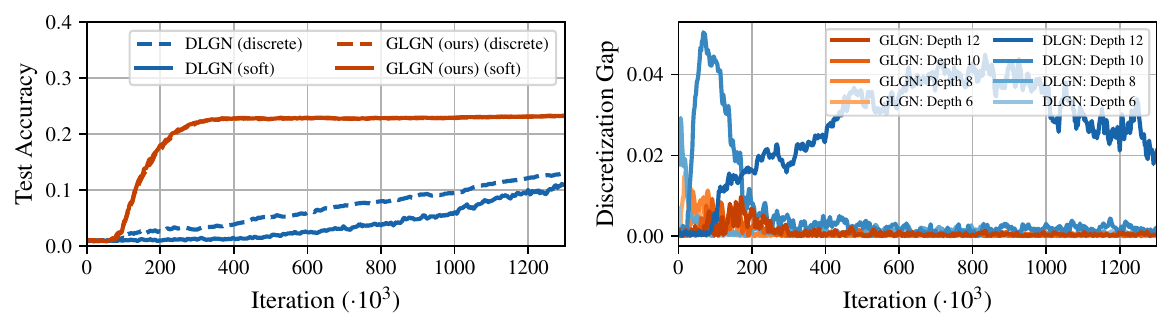}
    \caption{Performance of Gumbel LGNs and DLGNs on CIFAR-100. (\textbf{Left)} Test accuracy (width of \(256\)k, depth of 12). (\textbf{Right)} Discretization gap for various depths. DLGNs experience larger gaps and slower reduction as the depth increases. In contrast, Gumbel LGNs have consistently low gaps and fast reduction as the network depth increases.}
    \label{fig: cifar100 results plots}
\end{figure}

\paragraph{Shallow Network} We fix the depth of the network at 6 and increase the width to 2048K. On Figure \ref{fig: cifar10 shallow plots}, we see that. Interestingly, the accuracy of the soft DLGN converges within 50K iterations. However, the discretized version does not seem to improve with continued training.  We note that the final accuracy of GLGNs is higher than DLGNs. This suggests increasing the number of neurons through the width also contributes to the discretization gap. 

\begin{figure}[t]
    \centering
    \includegraphics[width=.5\linewidth]{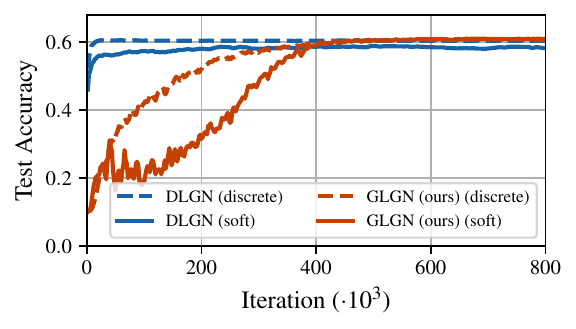}
    \caption{ Test accuracy on CIFAR-10 for a shallow, wide network (width 2048k, depth 6). }
    \label{fig: cifar10 shallow plots}
\end{figure}

\begin{figure}[t]
    \centering
    \includegraphics[width=\linewidth]{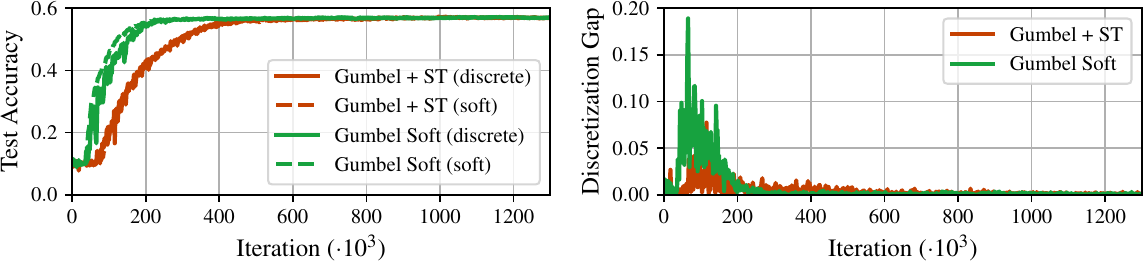}
    \caption{Straight-through (ST) estimator ablation (width of 256k, depth 12). The Gumbel LGNs uses hard gate choices in the forward pass as shown in \eqref{eq: gumbel argmax} called the ST estimator. On the left, we show the test accuracy over training iterations; on the right, we show the discretization gap. Gumbel LGNs with ST estimator converge slightly slower in test accuracy, but the discretization gap is smaller.}
    \label{fig: Straight through ablation}
\end{figure}

\begin{figure}[t]
    \centering
    \includegraphics[width=\linewidth]{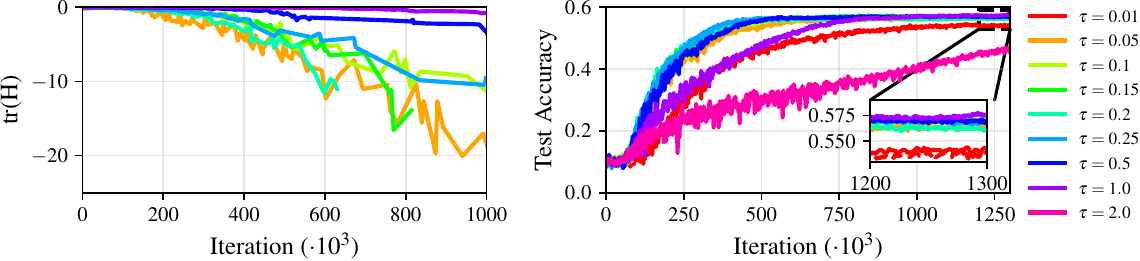}
    \caption{Ablation over the temperature \(\tau\) for Gumbel LGNs. \textbf{Left)} Estimated Hessian trace using Hutchinson’s method. 
    The trace shrinks as \(\tau\) decreases, indicating fewer large positive eigenvalues, thus suggesting a flatter loss surface, which may reduce the risk of loss increases when discretizing parameters. \textbf{Right)} Test accuracy for the $\tau$-values. We see a goldilocks zone for the temperature, as if $\tau$ is large ($>1$) or small ($<0.1$), then the network converges much slower. In the zoomed-in view, we plot the non-discretized view as dashed lines and see that these are similar to the discretized values, i.e., the discretization gap is low for all. Except for $\tau=0.01$ and $\tau=2$, there is only a small variation in the final accuracy, as seen in \Cref{tab: tau ablation}.}
    \label{fig: tau ablations}
\end{figure}

\paragraph{Straight-Through Estimator Ablation}

To better understand the source of gap reduction achieved by the Gumbel-Softmax trick, we perform an ablation to isolate the contribution of the ST estimator. As discussed previously, Gumbel-Softmax combines (i) ST estimation as seen in \eqref{eq: gumbel argmax} and (ii) implicit smoothing via Gumbel noise. By disabling the ST path, we aim to identify whether gap reduction primarily stems from the discrete gradient approximation or the added stochasticity. The setup without ST corresponds to DLGNs with noisy logits and is denoted as Soft Gumbel. 

In \Cref{fig: Straight through ablation}, we see that imputing Gumbel noise alone impacts both convergence and discretization compared to DLGNs. However, we observe that including the ST estimator delays convergence for a fixed \(\tau\), but further reduces the discretization gap.

\paragraph{Curvature, Hessian, and Smoothness}
As we showed in \Cref{sec: theoretical results}, loss minimization for Gumbel LGNs implicitly reduces the curvature of minima by minimizing the trace of the Hessian. The impact of the trace compared to the loss depends on the temperature parameter $\tau$. Thus, we perform an ablation study on the effect of $\tau$, estimate the trace of the Hessian, and visualize the loss landscape.

\paragraph{Ablation over \( \tau \)-parameter}

We evaluate how varying the temperature \( \tau \) affects optimization dynamics. Recall that higher \( \tau \) reduces the degree of implicit smoothing, potentially leading to sharper minima and slower convergence. In our experiment, we test \(\tau\in[0.01, 2.0]\).  We show the results in \Cref{tab: tau ablation,fig: tau ablations} where we observe a goldilocks zone for the temperature; if $\tau$ is large ($>1$) or small ($<0.1$), then the network converges much slower. However, higher temperatures such as $1$ seem to converge to slightly better solutions. Although the difference is minor, $<0.5\%$ when $\tau=0.25$ goes to $\tau=1$.

\begin{table}[t]
    \centering
    \caption{Maximum and final test accuracy for the tested $\tau$ values. The iterations column indicates the number of training iterations ($\cdot 10^3$) required to be within $1\%$ of the maximum accuracy. We see that with a medium value of $\tau\approx0.25$ the network converges much faster than for high values $>1$.}
    \begin{tabular}{l|rrrrrrrrr}
        \toprule
        $\tau$ & 0.01 & 0.05 & 0.10 & 0.15 & 0.20 & 0.25 & 0.50 & 1.00 & 2.00 \\
        \midrule
        Max accuracy & 0.547 & 0.566 & 0.574 & 0.574 & 0.566 & 0.573 & 0.573 & 0.578 & 0.490 \\
        Final accuracy & 0.546 & 0.564 & 0.570 & 0.571 & 0.563 & 0.568 & 0.572 & 0.575 & 0.480 \\
        Iterations ($\cdot 10^3$) & 972 & 602 & 632 & 518 & 472 & 440 & 530 & 918 & 1342 \\
        \bottomrule
        \end{tabular}
    \label{tab: tau ablation}
\end{table}

\begin{figure}[t]
    \centering
    \includegraphics[width=.9\linewidth]{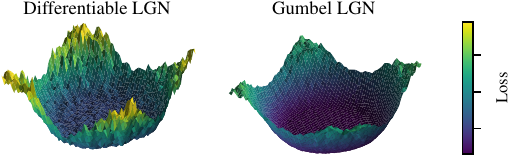}
    \caption{Visualization of loss landscapes. \textbf{Left:} Loss landscape of a Differentiable LGN. We see that the landscape is overall noisy. \textbf{Right:} Loss landscape of a Gumbel LGN with \( \tau = 1.0 \). We observe a much smoother loss landscape compared to the DLGNs.}
    \label{fig: loss landscapes}
\end{figure}

\paragraph{Hessian Trace Approximation} 

The Hessian scales quadratically with model size, so direct computations are infeasible for our networks with millions of parameters. Still, we can use iterative methods to approximate the trace, etc. \citep{misesPraktischeVerfahrenGleichungsauflosung1929a, misesPraktischeVerfahrenGleichungsauflosung1929b, ipsen1997computing, hutchinson1989stochastic, trefethen2022numerical}. 
We approximate the trace using Hutchinson’s method with 200 Rademacher random vectors, where each coordinate is independently sampled from \(\{-1, +1\}\) with equal probability \citep{hutchinson1989stochastic}. Full experimental details are provided in \Cref{appendix: estimate-trace}.

As shown in \Cref{fig: tau ablations}, decreasing the \(\tau\)-parameter reduces the estimated Hessian trace. 
This aligns with the theoretical insights from \Cref{sec: theoretical results}; lower \(\tau\) values place greater weight on trace reduction. The trace is negative, this is expected: stochastic gradient noise in overparameterized networks tends to systematically lower the expected Hessian trace, biasing solutions toward flatter regions of the loss landscape that may have negative trace values \citep{sagun2016eigenvalues, sagun2017empirical, wei2019noise}.  
Large negative eigenvalues often vanish, the trace remains influenced by many small eigenvalues, resulting in a negative overall trace.

\paragraph{Curvature Visualization}
To qualitatively assess the loss landscape curvature, we project the high-dimensional parameter space onto two-dimensional subspaces. Following \citet{NEURIPS2018_a41b3bb3}, we select random directions and interpolate the loss surface along these axes, providing insight into the optimization landscape's geometry around learned solutions. Visualization details are in \Cref{appendix: loss surface details}. \Cref{fig: loss landscapes} shows that Gumbel LGN has a visually smoother loss surface.

\paragraph{Entropy over Logic Gates}
\citet{NEURIPS2022_0d3496dd} noted that neurons collapse to single gates, but they did not investigate the extent of the collapse. We examine this through neuron entropy by sampling 100k newly initialized neurons and computing the 95\% interval. We also estimate expected entropy theoretically (see \Cref{appendix: expected entropy}), giving us a baseline distribution for neurons that have not learned. As neurons collapse, their entropy converges to 0. \Cref{fig: entropy over logic gates} shows that many early Differentiable LGN layers do not collapse, while Gumbel LGN neurons converge with entropies near 0. Defining \emph{unused gates} as those with entropy above the 2.5\%-percentile threshold, Gumbel LGNs have 0.00\% and DLGNs have 49.81\% unused gates, representing a 100.00\% reduction by Gumbel.

\begin{figure}[t]
    \centering
    \includegraphics[width=\textwidth]{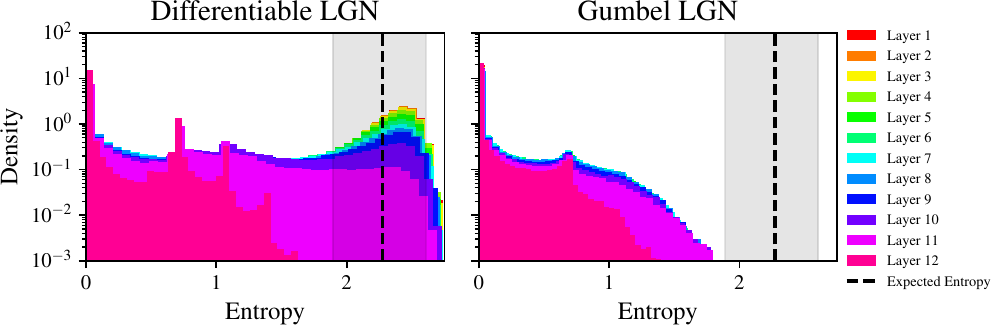}
    \caption{Entropy distribution for neurons in each layer for trained DLGNs and Gumbel LGNs models. The dashed black line indicates the expected entropy (cf. \Cref{appendix: expected entropy}) of neurons before training, and the shaded region is the 95\% interval computed by sampling. We see that almost all the neurons in Gumbel LGNs have converged, while neurons in early layers of DLGNs still have high entropy.}
    \label{fig: entropy over logic gates}
\end{figure}

\section{Limitations}
While Gumbel LGNs demonstrate significant improvements for deeper networks, limitations do remain. Our evaluation focuses primarily on CIFAR-10 and CIFAR-100, with limited exploration of more complex datasets. The temperature parameter $\tau$ requires tuning to balance convergence speed and accuracy. Finally, our theoretical analysis connects Gumbel noise to Hessian trace minimization under simplifications, but a comprehensive theoretical treatment of the discretization gap remains an open challenge. Moreover, a comprehensive analysis of width scaling and the interplay between width and depth is left for future work.


\section{Conclusion}
We introduced Gumbel Logic Gate Networks (Gumbel LGNs), addressing two critical limitations of DLGNs: slow convergence during training and a large discretization gap. Our theoretical analysis shows that Gumbel noise during gate selection promotes flatter minima by implicitly minimizing the Hessian trace, reducing sensitivity to parameter discretization.
Experiments on CIFAR-100 and CIFAR-10 demonstrate that Gumbel LGNs converge up to $4.5\times$ faster in wall-clock time than DLGNs while reducing the discretization gap by $98\%$ and achieving $100.0\%$ improvement in neuron utilization. These advantages become more pronounced with depth, indicating favorable scaling properties.
Our improvements are dataset and architecture-independent, and several promising directions remain for future exploration, e.g. adaptive temperature scheduling.

\newpage
\bibliography{references}
\newpage
\section*{NeurIPS Paper Checklist}

\begin{enumerate}

\item {\bf Claims}
    \item[] Question: Do the main claims made in the abstract and introduction accurately reflect the paper's contributions and scope?
    \item[] Answer: \answerYes{} 
    \item[] Justification: 
    \item[] Guidelines:
    \begin{itemize}
        \item The answer NA means that the abstract and introduction do not include the claims made in the paper.
        \item The abstract and/or introduction should clearly state the claims made, including the contributions made in the paper and important assumptions and limitations. A No or NA answer to this question will not be perceived well by the reviewers. 
        \item The claims made should match theoretical and experimental results, and reflect how much the results can be expected to generalize to other settings. 
        \item It is fine to include aspirational goals as motivation as long as it is clear that these goals are not attained by the paper. 
    \end{itemize}

\item {\bf Limitations}
    \item[] Question: Does the paper discuss the limitations of the work performed by the authors?
    \item[] Answer: \answerYes{} 
    \item[] Justification: 
    \item[] Guidelines:
    \begin{itemize}
        \item The answer NA means that the paper has no limitation while the answer No means that the paper has limitations, but those are not discussed in the paper. 
        \item The authors are encouraged to create a separate "Limitations" section in their paper.
        \item The paper should point out any strong assumptions and how robust the results are to violations of these assumptions (e.g., independence assumptions, noiseless settings, model well-specification, asymptotic approximations only holding locally). The authors should reflect on how these assumptions might be violated in practice and what the implications would be.
        \item The authors should reflect on the scope of the claims made, e.g., if the approach was only tested on a few datasets or with a few runs. In general, empirical results often depend on implicit assumptions, which should be articulated.
        \item The authors should reflect on the factors that influence the performance of the approach. For example, a facial recognition algorithm may perform poorly when image resolution is low or images are taken in low lighting. Or a speech-to-text system might not be used reliably to provide closed captions for online lectures because it fails to handle technical jargon.
        \item The authors should discuss the computational efficiency of the proposed algorithms and how they scale with dataset size.
        \item If applicable, the authors should discuss possible limitations of their approach to address problems of privacy and fairness.
        \item While the authors might fear that complete honesty about limitations might be used by reviewers as grounds for rejection, a worse outcome might be that reviewers discover limitations that aren't acknowledged in the paper. The authors should use their best judgment and recognize that individual actions in favor of transparency play an important role in developing norms that preserve the integrity of the community. Reviewers will be specifically instructed to not penalize honesty concerning limitations.
    \end{itemize}

\item {\bf Theory assumptions and proofs}
    \item[] Question: For each theoretical result, does the paper provide the full set of assumptions and a complete (and correct) proof?
    \item[] Answer: \answerYes{} 
    \item[] Justification: Proof in \Cref{appendix: gumbel-smooth}.
    \item[] Guidelines:
    \begin{itemize}
        \item The answer NA means that the paper does not include theoretical results. 
        \item All the theorems, formulas, and proofs in the paper should be numbered and cross-referenced.
        \item All assumptions should be clearly stated or referenced in the statement of any theorems.
        \item The proofs can either appear in the main paper or the supplemental material, but if they appear in the supplemental material, the authors are encouraged to provide a short proof sketch to provide intuition. 
        \item Inversely, any informal proof provided in the core of the paper should be complemented by formal proofs provided in appendix or supplemental material.
        \item Theorems and Lemmas that the proof relies upon should be properly referenced. 
    \end{itemize}

    \item {\bf Experimental result reproducibility}
    \item[] Question: Does the paper fully disclose all the information needed to reproduce the main experimental results of the paper to the extent that it affects the main claims and/or conclusions of the paper (regardless of whether the code and data are provided or not)?
    \item[] Answer: \answerYes{} 
    \item[] Justification: See, for instance, \Cref{appendix: hyperparameters,appendix: loss surface details}.
    \item[] Guidelines:
    \begin{itemize}
        \item The answer NA means that the paper does not include experiments.
        \item If the paper includes experiments, a No answer to this question will not be perceived well by the reviewers: Making the paper reproducible is important, regardless of whether the code and data are provided or not.
        \item If the contribution is a dataset and/or model, the authors should describe the steps taken to make their results reproducible or verifiable. 
        \item Depending on the contribution, reproducibility can be accomplished in various ways. For example, if the contribution is a novel architecture, describing the architecture fully might suffice, or if the contribution is a specific model and empirical evaluation, it may be necessary to either make it possible for others to replicate the model with the same dataset, or provide access to the model. In general. releasing code and data is often one good way to accomplish this, but reproducibility can also be provided via detailed instructions for how to replicate the results, access to a hosted model (e.g., in the case of a large language model), releasing of a model checkpoint, or other means that are appropriate to the research performed.
        \item While NeurIPS does not require releasing code, the conference does require all submissions to provide some reasonable avenue for reproducibility, which may depend on the nature of the contribution. For example
        \begin{enumerate}
            \item If the contribution is primarily a new algorithm, the paper should make it clear how to reproduce that algorithm.
            \item If the contribution is primarily a new model architecture, the paper should describe the architecture clearly and fully.
            \item If the contribution is a new model (e.g., a large language model), then there should either be a way to access this model for reproducing the results or a way to reproduce the model (e.g., with an open-source dataset or instructions for how to construct the dataset).
            \item We recognize that reproducibility may be tricky in some cases, in which case authors are welcome to describe the particular way they provide for reproducibility. In the case of closed-source models, it may be that access to the model is limited in some way (e.g., to registered users), but it should be possible for other researchers to have some path to reproducing or verifying the results.
        \end{enumerate}
    \end{itemize}

\item {\bf Open access to data and code}
    \item[] Question: Does the paper provide open access to the data and code, with sufficient instructions to faithfully reproduce the main experimental results, as described in supplemental material?
    \item[] Answer: \answerYes{} 
    \item[] Justification: CIFAR-10 is an open-source dataset and we provide the codebase. 
    \item[] Guidelines:
    \begin{itemize}
        \item The answer NA means that paper does not include experiments requiring code.
        \item Please see the NeurIPS code and data submission guidelines (\url{https://nips.cc/public/guides/CodeSubmissionPolicy}) for more details.
        \item While we encourage the release of code and data, we understand that this might not be possible, so “No” is an acceptable answer. Papers cannot be rejected simply for not including code, unless this is central to the contribution (e.g., for a new open-source benchmark).
        \item The instructions should contain the exact command and environment needed to run to reproduce the results. See the NeurIPS code and data submission guidelines (\url{https://nips.cc/public/guides/CodeSubmissionPolicy}) for more details.
        \item The authors should provide instructions on data access and preparation, including how to access the raw data, preprocessed data, intermediate data, and generated data, etc.
        \item The authors should provide scripts to reproduce all experimental results for the new proposed method and baselines. If only a subset of experiments are reproducible, they should state which ones are omitted from the script and why.
        \item At submission time, to preserve anonymity, the authors should release anonymized versions (if applicable).
        \item Providing as much information as possible in supplemental material (appended to the paper) is recommended, but including URLs to data and code is permitted.
    \end{itemize}

\item {\bf Experimental setting/details}
    \item[] Question: Does the paper specify all the training and test details (e.g., data splits, hyperparameters, how they were chosen, type of optimizer, etc.) necessary to understand the results?
    \item[] Answer: \answerYes{} 
    \item[] Justification: See \Cref{appendix: hyperparameters}.
    \item[] Guidelines:
    \begin{itemize}
        \item The answer NA means that the paper does not include experiments.
        \item The experimental setting should be presented in the core of the paper to a level of detail that is necessary to appreciate the results and make sense of them.
        \item The full details can be provided either with the code, in appendix, or as supplemental material.
    \end{itemize}

\item {\bf Experiment statistical significance}
    \item[] Question: Does the paper report error bars suitably and correctly defined or other appropriate information about the statistical significance of the experiments?
    \item[] Answer: \answerNo{} 
    \item[] Justification: We were not able to rerun the experiments under multiple seeds due to compute constraints.
    \item[] Guidelines:
    \begin{itemize}
        \item The answer NA means that the paper does not include experiments.
        \item The authors should answer "Yes" if the results are accompanied by error bars, confidence intervals, or statistical significance tests, at least for the experiments that support the main claims of the paper.
        \item The factors of variability that the error bars are capturing should be clearly stated (for example, train/test split, initialization, random drawing of some parameter, or overall run with given experimental conditions).
        \item The method for calculating the error bars should be explained (closed form formula, call to a library function, bootstrap, etc.)
        \item The assumptions made should be given (e.g., Normally distributed errors).
        \item It should be clear whether the error bar is the standard deviation or the standard error of the mean.
        \item It is OK to report 1-sigma error bars, but one should state it. The authors should preferably report a 2-sigma error bar than state that they have a 96\% CI, if the hypothesis of Normality of errors is not verified.
        \item For asymmetric distributions, the authors should be careful not to show in tables or figures symmetric error bars that would yield results that are out of range (e.g. negative error rates).
        \item If error bars are reported in tables or plots, The authors should explain in the text how they were calculated and reference the corresponding figures or tables in the text.
    \end{itemize}

\item {\bf Experiments compute resources}
    \item[] Question: For each experiment, does the paper provide sufficient information on the computer resources (type of compute workers, memory, time of execution) needed to reproduce the experiments?
    \item[] Answer: \answerYes{} 
    \item[] Justification: See \Cref{appendix: computational resources}.
    \item[] Guidelines:
    \begin{itemize}
        \item The answer NA means that the paper does not include experiments.
        \item The paper should indicate the type of compute workers CPU or GPU, internal cluster, or cloud provider, including relevant memory and storage.
        \item The paper should provide the amount of compute required for each of the individual experimental runs as well as estimate the total compute. 
        \item The paper should disclose whether the full research project required more compute than the experiments reported in the paper (e.g., preliminary or failed experiments that didn't make it into the paper). 
    \end{itemize}
    
\item {\bf Code of ethics}
    \item[] Question: Does the research conducted in the paper conform, in every respect, with the NeurIPS Code of Ethics \url{https://neurips.cc/public/EthicsGuidelines}?
    \item[] Answer: \answerYes{} 
    \item[] Justification: 
    \item[] Guidelines:
    \begin{itemize}
        \item The answer NA means that the authors have not reviewed the NeurIPS Code of Ethics.
        \item If the authors answer No, they should explain the special circumstances that require a deviation from the Code of Ethics.
        \item The authors should make sure to preserve anonymity (e.g., if there is a special consideration due to laws or regulations in their jurisdiction).
    \end{itemize}

\item {\bf Broader impacts}
    \item[] Question: Does the paper discuss both potential positive societal impacts and negative societal impacts of the work performed?
    \item[] Answer: \answerNA{} 
    \item[] Justification: We conduct foundational research.
    \item[] Guidelines:
    \begin{itemize}
        \item The answer NA means that there is no societal impact of the work performed.
        \item If the authors answer NA or No, they should explain why their work has no societal impact or why the paper does not address societal impact.
        \item Examples of negative societal impacts include potential malicious or unintended uses (e.g., disinformation, generating fake profiles, surveillance), fairness considerations (e.g., deployment of technologies that could make decisions that unfairly impact specific groups), privacy considerations, and security considerations.
        \item The conference expects that many papers will be foundational research and not tied to particular applications, let alone deployments. However, if there is a direct path to any negative applications, the authors should point it out. For example, it is legitimate to point out that an improvement in the quality of generative models could be used to generate deepfakes for disinformation. On the other hand, it is not needed to point out that a generic algorithm for optimizing neural networks could enable people to train models that generate Deepfakes faster.
        \item The authors should consider possible harms that could arise when the technology is being used as intended and functioning correctly, harms that could arise when the technology is being used as intended but gives incorrect results, and harms following from (intentional or unintentional) misuse of the technology.
        \item If there are negative societal impacts, the authors could also discuss possible mitigation strategies (e.g., gated release of models, providing defenses in addition to attacks, mechanisms for monitoring misuse, mechanisms to monitor how a system learns from feedback over time, improving the efficiency and accessibility of ML).
    \end{itemize}
    
\item {\bf Safeguards}
    \item[] Question: Does the paper describe safeguards that have been put in place for responsible release of data or models that have a high risk for misuse (e.g., pretrained language models, image generators, or scraped datasets)?
    \item[] Answer: \answerNA{} 
    \item[] Justification: To the best of our knowledge, there are no risks posed by the release of our code.
    \item[] Guidelines:
    \begin{itemize}
        \item The answer NA means that the paper poses no such risks.
        \item Released models that have a high risk for misuse or dual-use should be released with necessary safeguards to allow for controlled use of the model, for example by requiring that users adhere to usage guidelines or restrictions to access the model or implementing safety filters. 
        \item Datasets that have been scraped from the Internet could pose safety risks. The authors should describe how they avoided releasing unsafe images.
        \item We recognize that providing effective safeguards is challenging, and many papers do not require this, but we encourage authors to take this into account and make a best faith effort.
    \end{itemize}

\item {\bf Licenses for existing assets}
    \item[] Question: Are the creators or original owners of assets (e.g., code, data, models), used in the paper, properly credited and are the license and terms of use explicitly mentioned and properly respected?
    \item[] Answer: \answerYes{} 
    \item[] Justification: We only used open-sourced models and data.
    \item[] Guidelines:
    \begin{itemize}
        \item The answer NA means that the paper does not use existing assets.
        \item The authors should cite the original paper that produced the code package or dataset.
        \item The authors should state which version of the asset is used and, if possible, include a URL.
        \item The name of the license (e.g., CC-BY 4.0) should be included for each asset.
        \item For scraped data from a particular source (e.g., website), the copyright and terms of service of that source should be provided.
        \item If assets are released, the license, copyright information, and terms of use in the package should be provided. For popular datasets, \url{paperswithcode.com/datasets} has curated licenses for some datasets. Their licensing guide can help determine the license of a dataset.
        \item For existing datasets that are re-packaged, both the original license and the license of the derived asset (if it has changed) should be provided.
        \item If this information is not available online, the authors are encouraged to reach out to the asset's creators.
    \end{itemize}

\item {\bf New assets}
    \item[] Question: Are new assets introduced in the paper well documented and is the documentation provided alongside the assets?
    \item[] Answer: \answerYes{} 
    \item[] Justification: We provide instructions for the codebase in the zip file. 
    \item[] Guidelines:
    \begin{itemize}
        \item The answer NA means that the paper does not release new assets.
        \item Researchers should communicate the details of the dataset/code/model as part of their submissions via structured templates. This includes details about training, license, limitations, etc. 
        \item The paper should discuss whether and how consent was obtained from people whose asset is used.
        \item At submission time, remember to anonymize your assets (if applicable). You can either create an anonymized URL or include an anonymized zip file.
    \end{itemize}

\item {\bf Crowdsourcing and research with human subjects}
    \item[] Question: For crowdsourcing experiments and research with human subjects, does the paper include the full text of instructions given to participants and screenshots, if applicable, as well as details about compensation (if any)? 
    \item[] Answer: \answerNA{} 
    \item[] Justification: NA
    \item[] Guidelines:
    \begin{itemize}
        \item The answer NA means that the paper does not involve crowdsourcing nor research with human subjects.
        \item Including this information in the supplemental material is fine, but if the main contribution of the paper involves human subjects, then as much detail as possible should be included in the main paper. 
        \item According to the NeurIPS Code of Ethics, workers involved in data collection, curation, or other labor should be paid at least the minimum wage in the country of the data collector. 
    \end{itemize}

\item {\bf Institutional review board (IRB) approvals or equivalent for research with human subjects}
    \item[] Question: Does the paper describe potential risks incurred by study participants, whether such risks were disclosed to the subjects, and whether Institutional Review Board (IRB) approvals (or an equivalent approval/review based on the requirements of your country or institution) were obtained?
    \item[] Answer: \answerNA{} 
    \item[] Justification: NA
    \item[] Guidelines:
    \begin{itemize}
        \item The answer NA means that the paper does not involve crowdsourcing nor research with human subjects.
        \item Depending on the country in which research is conducted, IRB approval (or equivalent) may be required for any human subjects research. If you obtained IRB approval, you should clearly state this in the paper. 
        \item We recognize that the procedures for this may vary significantly between institutions and locations, and we expect authors to adhere to the NeurIPS Code of Ethics and the guidelines for their institution. 
        \item For initial submissions, do not include any information that would break anonymity (if applicable), such as the institution conducting the review.
    \end{itemize}

\item {\bf Declaration of LLM usage}
    \item[] Question: Does the paper describe the usage of LLMs if it is an important, original, or non-standard component of the core methods in this research? Note that if the LLM is used only for writing, editing, or formatting purposes and does not impact the core methodology, scientific rigorousness, or originality of the research, declaration is not required.
    \item[] Answer: \answerNA{} 
    \item[] Justification: NA
    \item[] Guidelines:
    \begin{itemize}
        \item The answer NA means that the core method development in this research does not involve LLMs as any important, original, or non-standard components.
        \item Please refer to our LLM policy (\url{https://neurips.cc/Conferences/2025/LLM}) for what should or should not be described.
    \end{itemize}

\end{enumerate}

\newpage
\appendix

\section{Logic Gate Network Visualizations}\label{appendix: lgn diagram}

\begin{figure}[H]
    \centering
    \includegraphics[width=\linewidth]{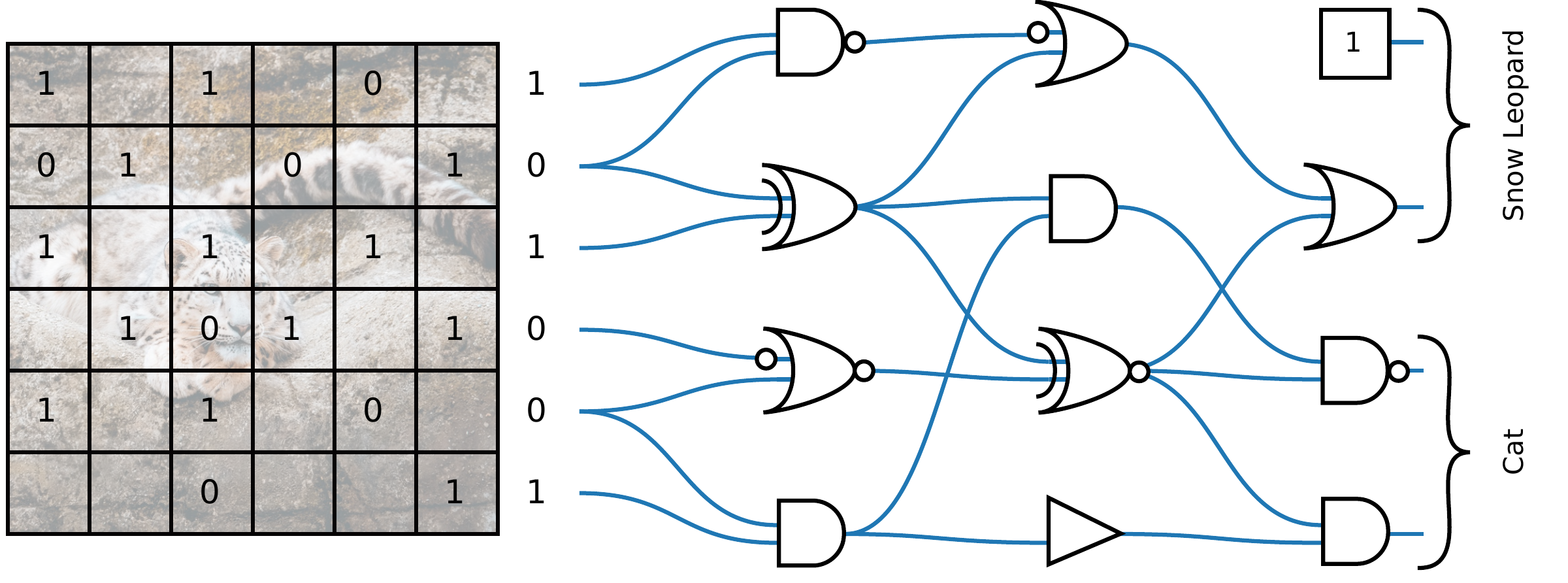}
    \caption{A diagram showing a standard logic gate network (LGN). Each logic gate receives two inputs from the previous layer. The image is first binarized before being passed into the network, and the output neurons are grouped, and each neuron in a group votes whether the image belongs to the group/class.}
    \label{fig: lgn diagram}
\end{figure}

\begin{figure}[H]
    \centering
    \includegraphics[width=\linewidth]{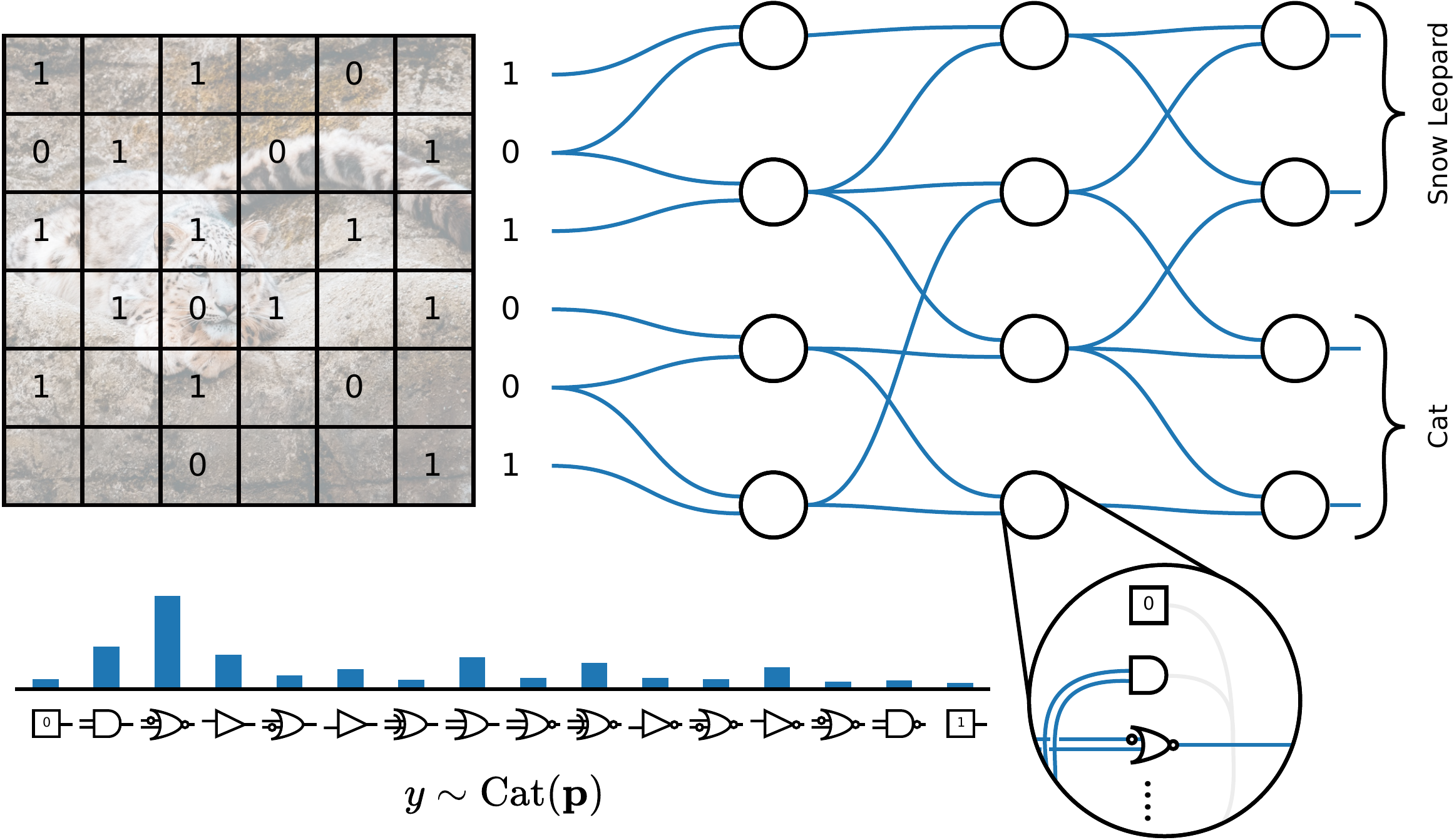}
    \caption{Forward pass through a Gumbel LGN. 
    The top panel shows neurons producing class scores. 
    \textbf{Bottom-left:} categorical distribution $\mathrm{Cat}(\mathbf{p})$ over relaxed logic gates, parameterized by learnable weights $\mathbf{z}\in\mathbb{R}^{16}$. 
    \textbf{Bottom-right (zoom-in):} internal view of one neuron. The signal passes through a single selected relaxed logic gate (colors indicate which gate is chosen).}
    \label{fig: GLGN diagram}
\end{figure}

\section{Gumbel Smoothing}\label{appendix: gumbel-smooth}
The proof of \Cref{lemma: gumbel smoothing} depends on the translation invariance of the softmax.
\begin{lemma}[Translation-Invariance of Softmax] Consider logits $\mathbf{z} \in \mathbb{R}^d$, then adding any constant $c \in \mathbb{R}$ to $\mathbf{z}$, $\mathbf{z} + c$, does not change the output of the softmax. Concretely, for any logit $z_i$ we have
\[ \mathrm{softmax}(z_i + c ) = \mathrm{softmax}(z_i )\]
\end{lemma}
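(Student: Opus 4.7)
The plan is to prove the translation invariance directly from the definition of the softmax, exploiting the multiplicative identity $\exp(a+b) = \exp(a)\exp(b)$. First I would write out the $i$-th component of the softmax applied to the shifted vector, namely
\[
  \mathrm{softmax}(z_i + c) \;=\; \frac{\exp(z_i + c)}{\sum_{j=1}^{d} \exp(z_j + c)},
\]
which is the definition already used implicitly throughout the paper (cf. \Cref{eq:lgn train mode}).

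Next I would apply the factorization $\exp(z_j + c) = \exp(c)\,\exp(z_j)$ to every summand in both numerator and denominator. Because $\exp(c)$ does not depend on the summation index $j$, it factors out of the sum in the denominator, and the same $\exp(c)$ appears in the numerator. The two cancel, leaving $\exp(z_i)/\sum_j \exp(z_j) = \mathrm{softmax}(z_i)$, which is exactly the claimed equality.

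This is essentially a one-line computation, so I do not anticipate a real obstacle. The only subtlety worth flagging is notational: the statement uses $\mathrm{softmax}(z_i)$ as shorthand for the $i$-th component $\mathrm{softmax}(\mathbf{z})_i$, and the argument goes through componentwise precisely because the shared normalizing denominator picks up the same factor $\exp(c)$ as the numerator. This lemma will then be invoked in the proof of \Cref{lemma: gumbel smoothing} to rewrite expressions of the form $\mathrm{softmax}((\mathbf{z}+\mathbf{g})/\tau)$ by freely shifting the logits, which is what enables the clean Taylor expansion in powers of $1/\tau$ around the deterministic point $\mathbf{z}/\tau$.
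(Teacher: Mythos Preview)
Your proof is correct and matches the paper's argument exactly: both write out the $i$-th softmax component, factor $\exp(c)$ from numerator and denominator via $\exp(z_j+c)=\exp(c)\exp(z_j)$, and cancel. As a minor aside, the lemma's actual role in the proof of \Cref{lemma: gumbel smoothing} is not to shift logits for the Taylor expansion but to conclude that $\nabla f(\mathbf{a})^\top\mathbf{1}=0$ and $H_f(\mathbf{a})\mathbf{1}=0$ (since $f$ is constant along the $\mathbf{1}$ direction), which is what kills the $\gamma$-dependent terms.
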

\begin{proof}
Denote \( z'_i := z_i + c\), then writing the output of our softmax yields

\begin{align*}
    \mathrm{softmax}(z'_i) &= \frac{e^{z'_i}}{ \sum_j  e^{z'_i}} \\
    &= \frac{e^{ z_i + c}}{ \sum_j  e^{ z_i + c}} \\
    &= \frac{e^c \cdot e^{ z_i}}{ \sum_j  e^c \cdot e^{ z_i}} \\
    &=  \frac{e^c \cdot e^{ z_i}}{  e^c  \cdot \sum_j \cdot e^{ z_i}} \\
    &=  \frac{e^{z_i}}{ \sum_j  e^{z_i}}  = \mathrm{softmax}(z_i)
\end{align*}

\end{proof}

Restating \Cref{lemma: gumbel smoothing}

\begin{lemma}[Gumbel‐Smoothing]
Let \(\mathcal L: \mathbb{R}^{16} \to \ \mathbb{R} \) be twice continuously differentiable (with Lipschitz Hessian), and let
\( \mathbf{z} \in \mathbb{R}^{16}, \mathbf{g} \sim\mathrm{Gumbel}(0,1)^{16}\).  Consider \( J(\mathbf{z}) \) 

\[ J(\mathbf{z}) = \mathbb{E} \left[ \mathcal{L}(\mathrm{softmax}( ( \mathbf{z} + \mathbf{g} )/ \tau ) \right]  \]

and set \(\mathbf{a} = \mathbf{z} / \tau \) and \( f(\mathbf{a}) = \mathcal{L}(\mathrm{softmax}(\mathbf{a}) ) \), then
\[
J(\mathbf{z})
= \mathcal{L}(\mathrm{softmax}( \mathbf{z} / \tau) ) + \frac{\pi^2}{12 \tau^2} \mathrm{tr}(H_{f}( a))
+ O(\tau^{-3}).
\]
\end{lemma}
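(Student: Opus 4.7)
The plan is to reduce everything to the rescaled variable $\mathbf{a} = \mathbf{z}/\tau$, use softmax translation invariance to re-center the Gumbel noise to mean zero, and then apply a second-order Taylor expansion of $f$ with the Lipschitz-Hessian remainder, closing with moment computations for Gumbel$(0,1)$.

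First, I rewrite $J(\mathbf{z}) = \mathbb{E}[f(\mathbf{a} + \mathbf{g}/\tau)]$ using the definition of $f$. The Gumbel$(0,1)$ entries have Euler--Mascheroni mean $\gamma$ and variance $\pi^2/6$, so $\mathbf{g}/\tau$ is not centered, and a direct Taylor expansion would leave a spurious $(\gamma/\tau)\nabla f(\mathbf{a})^{\top}\mathbf{1}$ linear contribution. The translation-invariance lemma already stated in the appendix, $f(\mathbf{a} + c\mathbf{1}) = f(\mathbf{a})$, lets me subtract $\gamma/\tau$ from every component for free. Defining $\tilde{\mathbf{h}} = (\mathbf{g} - \gamma\mathbf{1})/\tau$, the coordinates $\tilde h_i$ are i.i.d.\ with mean $0$, variance $\pi^{2}/(6\tau^{2})$, and all absolute moments finite, and the identity $f(\mathbf{a} + \mathbf{g}/\tau) = f(\mathbf{a} + \tilde{\mathbf{h}})$ holds pointwise. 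Next I apply second-order Taylor with the Lagrange-type remainder controlled by the Lipschitz constant $L$ of $H_f$ (which exists since softmax is smooth with bounded derivatives and $\mathcal L$ has Lipschitz Hessian on the bounded image of softmax):
\[
f(\mathbf{a}+\tilde{\mathbf{h}}) = f(\mathbf{a}) + \nabla f(\mathbf{a})^{\top}\tilde{\mathbf{h}} + \tfrac{1}{2}\,\tilde{\mathbf{h}}^{\top}H_f(\mathbf{a})\tilde{\mathbf{h}} + R(\tilde{\mathbf{h}}), \qquad |R(\tilde{\mathbf{h}})| \le \tfrac{L}{6}\|\tilde{\mathbf{h}}\|^{3}.
\]
Taking expectation, the linear term vanishes because $\mathbb{E}[\tilde h_i] = 0$. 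For the quadratic term, independence gives $\mathbb{E}[\tilde h_i \tilde h_j] = \delta_{ij}\,\pi^{2}/(6\tau^{2})$, so only the diagonal of $H_f$ survives and
\[
\mathbb{E}\bigl[\tfrac{1}{2}\,\tilde{\mathbf{h}}^{\top}H_f(\mathbf{a})\tilde{\mathbf{h}}\bigr] = \frac{\pi^{2}}{12\tau^{2}}\,\mathrm{tr}(H_f(\mathbf{a})).
\]
Since the centered Gumbel has finite third absolute moment, $\mathbb{E}[\|\tilde{\mathbf{h}}\|^{3}] = O(\tau^{-3})$, hence $\mathbb{E}[R(\tilde{\mathbf{h}})] = O(\tau^{-3})$, yielding the claimed expansion with $f(\mathbf{a}) = \mathcal{L}(\mathrm{softmax}(\mathbf{z}/\tau))$.

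The main conceptual obstacle is the non-zero Gumbel mean: without the softmax translation-invariance lemma, the first-order term would not vanish and one would have to either absorb a $\gamma\mathbf{1}/\tau$ shift into $\mathbf{a}$ (breaking the clean form of the conclusion) or carry the extra term through the computation. Everything else is routine bookkeeping: Gumbel's variance supplies the $\pi^{2}/6$ factor, independence of coordinates erases the off-diagonal entries of $H_f$, and the Lipschitz-Hessian hypothesis turns the Taylor remainder into the stated $O(\tau^{-3})$ tail via finite third moments of the centered Gumbel.
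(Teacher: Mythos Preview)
Your proof is correct and uses the same ingredients as the paper---second-order Taylor expansion of $f$ around $\mathbf a$, the Gumbel moments $\mathbb E[g_i]=\gamma$, $\mathrm{Var}(g_i)=\pi^2/6$, and the translation invariance of softmax---but you deploy them in a different order. The paper Taylor-expands first with the \emph{uncentered} noise $\mathbf g/\tau$, which produces extra terms $(\gamma/\tau)\,\nabla f(\mathbf a)^{\top}\mathbf 1$ and $(\gamma^2/2\tau^2)\,\mathbf 1^{\top}H_f(\mathbf a)\mathbf 1$, and then invokes the \emph{differential} consequences of translation invariance, $\nabla f(\mathbf a)^{\top}\mathbf 1=0$ and $H_f(\mathbf a)\mathbf 1=0$, to kill them. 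You instead invoke translation invariance at the \emph{function} level up front, $f(\mathbf a+\mathbf g/\tau)=f(\mathbf a+\tilde{\mathbf h})$ with $\tilde{\mathbf h}=(\mathbf g-\gamma\mathbf 1)/\tau$, so the noise is centered before expanding and the $\gamma$-terms never appear. Your ordering is a bit cleaner: it avoids computing $\mathbb E[\mathbf g\mathbf g^{\top}]=\gamma^2\mathbf 1\mathbf 1^{\top}+(\pi^2/6)I$ and does not require differentiating the invariance identity. The paper's ordering, on the other hand, makes explicit why the nonzero Gumbel mean is harmless at every order, not just the first. Both routes yield the same $\frac{\pi^2}{12\tau^2}\mathrm{tr}(H_f)$ coefficient and $O(\tau^{-3})$ remainder.
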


\begin{proof}
    Rewriting \( J(\mathbf{z}) \) in terms of \( f \) gives us
    \[ J( \mathbf{z}) = \mathbb{E} \left[ f \left( \mathbf{a} + \frac{\mathbf{g}}{\tau} \right) \right] \]

    Consider a second-order Taylor expansion of \(f \) around \( \mathbf{a}\)

    \[ 
    f \left( \mathbf{a} + \frac{\mathbf{g}}{\tau } \right) = f( \mathbf{a} ) + \nabla f(\mathbf{a}) ^\top \left( \frac{\mathbf{g}}{\tau}  \right) + \frac{1}{2} \left( \frac{\mathbf{g}}{\tau}  \right)^\top H_f(\mathbf{a}) \left( \frac{\mathbf{g}}{\tau} \right) + O( \| \mathbf{g} \|^3 / \tau^3)
    \]
    Taking expectations, and recalling that \( \mathbb{E}\left[ g_i\right]  = \gamma, \; \mathrm{Var}( g_i) = \pi^2/6 \), where \( \gamma \approx 0.57721 \) is the Euler-Mascheroni constant. we get
    \[
    J(\mathbf{z}) = f( \mathbf{a}) +\left( \frac{\gamma}{\tau} \right)  \nabla f( \mathbf{a})^\top \mathbf{1} + \frac{1}{2 \tau^2} \left[ \gamma^2 \mathbf{1}^\top H_f (\mathbf{a}) \mathbf{1} + \frac{\pi^2}{6} \mathrm{tr}(H_f( \mathbf{a})) \right] + O(\tau^{-3})
    \]

which follows from \( \mathbb{E}[g_i^2 ] = \mathrm{Var}(g_i) + \mathbb{E}[g_i]^2 = \pi^2/6 + \gamma^2 \), \( \mathbb{E}\left[ g_i g_j\right] = \gamma^2 \) for \( i \neq j\).

\[ 
\mathbb{E}[\mathbf{g} \mathbf{g}^\top] = \gamma^2 \mathbf{1} \mathbf{1}^\top + \frac{\pi^2}{6} I
\]
and the following  trace-lemma
\[
\mathbb{E} \left[ \mathbf{g}^\top H_f( \mathbf{a} ) \mathbf{g} \right]= \mathrm{tr}(H_f(\mathbf{a}) \mathbb{E}[\mathbf{g} \mathbf{g}^\top]) = \gamma^2 \mathbf{1}^\top H_f(\mathbf{a})\mathbf{1} + \frac{\pi^2}{6} \mathrm{tr}(H_f(\mathbf{a}))
\]
    
Since the softmax is translation-invariant in its input \( \mathbf{a} \), we have \( \nabla f(\mathbf{a})^\top \mathbf{1} = 0 \) and \( H_f(\mathbf{a}) \mathbf{1} = 0 \), so all terms depending on \( \gamma\) drop, finally giving us

\[ 
J(\mathbf{z})
= \mathcal{L}(\mathrm{softmax}( \mathbf{z} / \tau )  ) + \frac{\pi^2}{12 \tau^2} \mathrm{tr}(H_{f}( \mathbf{z} / \tau))
+ O(\tau^{-3}).
\]

\end{proof}

Hence, minimizing our stochastic loss implicitly smoothens the curvature by minimizing the trace of the Hessian.

\section{Distribution over Logic Gates} 
\paragraph{Gate Distribution by Layer}
We look in \Cref{fig: gate distribution over layers} at the distribution of the logic gates in the final network. Our first observation is that the distributions are far more uniform for Gumbel LGNs in layers 1 to 11 than for DLGNs. At the same time, we see a sharp transition for DLGNs after layer 8. This could match the results in \Cref{fig: entropy over logic gates}, indicating that neurons in DLGNs struggle to converge in all but the final layers. 

The ``1'' gate can be seen as a bias towards specific classes. Notably, Gumbel LGN primarily uses this gate type in the last layer, so we analyze this further. 

\begin{figure}[H]
    \centering
    \includegraphics[width=\linewidth]{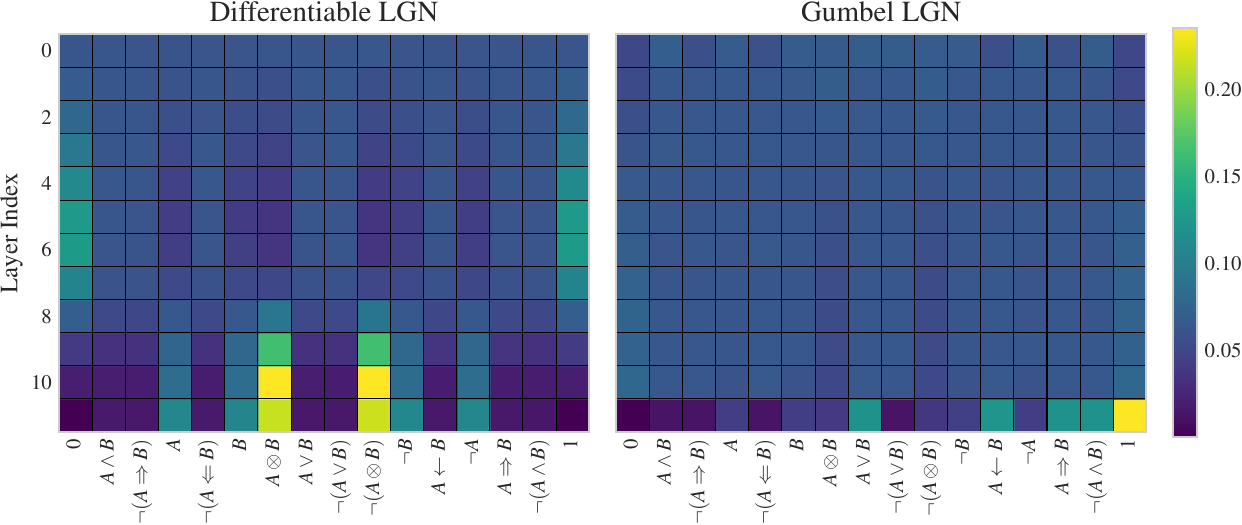}
    \caption{Gate distribution for the gates split by layer for a Differentiable LGN and a Gumbel LGN. Interestingly, the distribution is far more uniform for the Gumbel LGN in layers 1 to 11 than for the Differentiable LGN. In addition, the Gumbel LGN primarily has ``1'' gates in the final layer, which can be seen as a constant bias towards certain classes. We analyze this further in \Cref{appendix: gate distribution extended analysis}.}
    \label{fig: gate distribution over layers}
\end{figure}

\paragraph{Gate Distribution by Class}\label{appendix: gate distribution extended analysis}
We see in \Cref{fig: gate distribution by class} the gate distribution for each of the 10 classes in the last layer. Interestingly, the distributions between Softmax and Gumbel are quite different, but the classes are nearly identical. Since almost all classes have several ``1'' gates, pruning these would be possible as softmax is translation invariant.

\begin{figure}[t]
    \centering
    \includegraphics[width=\linewidth]{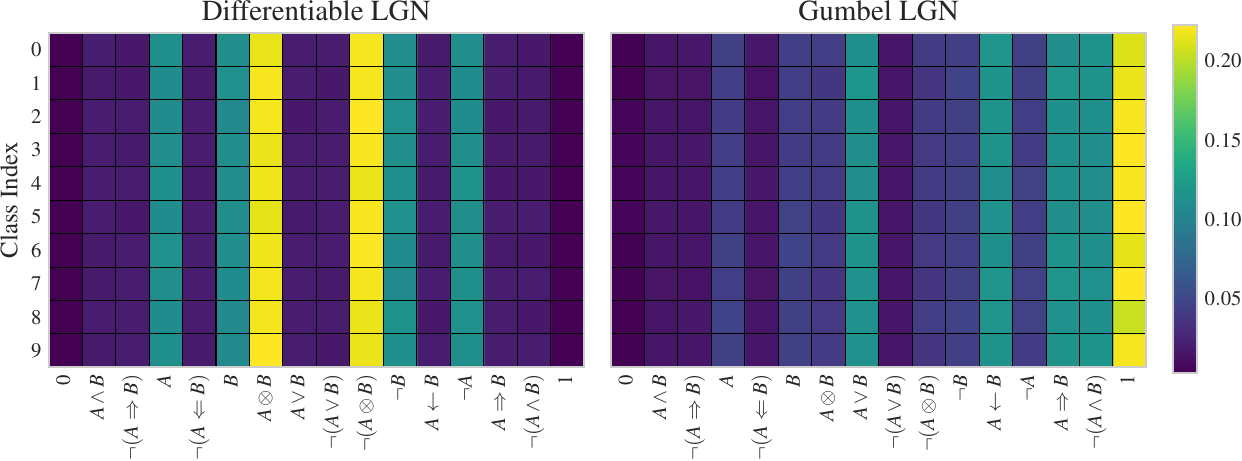}
    \caption{Gate distribution for each class in the CIFAR-10 classifiers. The gates are from the last layer before the groupsum is applied.}
    \label{fig: gate distribution by class}
\end{figure}


\section{Extended Sharpness-Aware Minimization}\label{appendix: extended sam}
A parallel line of research focuses on improving generalization by minimizing the sharpness of the loss landscape. 
Motivated by prior theoretical works on generalization and flat minima \citep{keskarLargeBatchTrainingDeep2017, dziugaiteComputingNonvacuousGeneralization2017, jiang*FantasticGeneralizationMeasures2019}, \citet{foretSharpnessAwareMinimizationEfficiently2021} introduced Sharpness-Aware Minimization (SAM). This technique explicitly seeks flat minima by optimizing the worst-case loss within a perturbation neighborhood. Let $L_\mathcal{S}(\boldsymbol{w})$ be a loss function over a training set $\mathcal{S}$ of samples from a distribution $\mathscr{D}$ evaluated for model parameters $\boldsymbol{w}$. 
\begin{align}
    \min_{\boldsymbol{w}} L^{SAM}_S(\boldsymbol{w}) + \lambda\|\boldsymbol{w}\|^2_2\quad \mathrm{where}\quad L^{SAM}_S(\boldsymbol{w}) \triangleq \max_{\|\boldsymbol{\epsilon}|_p\leq \rho} L_S(\boldsymbol{w} + \boldsymbol{\epsilon}),\label{eq:sam}
\end{align}
where $p\in[1,\infty[$ is the $p$-norm used (usually $p=2$) and $\rho>0$ is a hyperparameter \citep{foretSharpnessAwareMinimizationEfficiently2021}.
This optimization objective arises from the PAC-Bayesian generalization bound shown in \Cref{eq:pac-bayesian} \citep{mcallesterPACBayesianModelAveraging1999}, and \citet{foretSharpnessAwareMinimizationEfficiently2021} use it to show \Cref{eq: sam perturbation bound}. Below, $n=|\mathcal{S}|$, $k$ is the number of model parameters, $\mathscr{P}$ and $\mathscr{Q}$ are the prior and posterior distributions of the model parameters, respectively. Lastly, the equations hold with probability $1-\delta$.
\begin{align}
    \mathbb{E}_{\boldsymbol{w}\sim \mathscr{Q}} [L_\mathscr{D}(\boldsymbol{w})] \leq & \mathbb{E}_{\boldsymbol{w}\sim \mathscr{Q}}[L_\mathcal{S}(\boldsymbol{w})] +\sqrt{\frac{KL(\mathscr{Q}||\mathscr{P}) + \log \frac{n}{\delta}}{2(n-1)}}, \label{eq:pac-bayesian} \\
    L_{\mathscr{D}}(\boldsymbol{w}) \leq \max_{\|\boldsymbol{\epsilon}\|_2 \leq \rho} L_\mathcal{S}(\boldsymbol{w} + \boldsymbol{\epsilon}) +&\sqrt{\frac{k\log\left(1+\frac{\|\boldsymbol{w}\|_2^2}{\rho^2}\left(1+\sqrt{\frac{\log(n)}{k}}\right)^2\right) + 4\log\frac{n}{\delta} + \tilde{O}(1)}{n-1}}. \label{eq: sam perturbation bound}
\end{align}

Since its introduction, SAM has inspired numerous follow-up studies focused on improving computational efficiency \citep{liuEfficientScalableSharpnessAware2022, duEfficientSharpnessawareMinimization2022, NEURIPS2022_948b1c9d, NEURIPS2022_c859b99b, NEURIPS2022_9b79416c, muellerNormalizationLayersAre2023} as well as providing theoretical insights into its efficacy \citep{andriushchenkoUnderstandingSharpnessAwareMinimization2022, wangSharpnessAwareGradientMatching2023, wenHowDoesSharpnessAware2023, liFriendlySharpnessAwareMinimization2024}.


\section{Training GLGNs}\label{appendix: training glgns}

\begin{algorithm}[H]
\begin{algorithmic}
\WHILE{not converged}
    \STATE Sample Gumbel noise $g \sim \mathrm{Gumbel}(0, 1)$
    \STATE Compute soft sample $a' = \mathrm{softmax}((\log \alpha + g)/\tau)$
    \STATE Compute hard sample $\hat{a} = \mathrm{one\_hot}(\arg\max a')$
    \STATE Forward pass through logic gate network using $\hat{a}$ (with $\mathrm{stop\_grad}(\hat{a} - a') + a'$)
    \STATE Compute loss $\mathcal{L}$
    \STATE Backpropagate and update $\alpha$ and other parameters
\ENDWHILE
\end{algorithmic}
\caption{Training STE-GLGNs}
\label{alg:ste-glgn}
\end{algorithm}

\section{Estimating Hessian of Loss}\label{appendix: estimate-trace}

Computing the full Hessian of the loss function is not computationally feasible due to the quadratic scaling with the number of parameters. Instead, we use scalable stochastic methods to estimate the trace of the Hessian, which provides useful curvature information. Specifically, we focus on the trace as our \Cref{lemma: gumbel smoothing} directly minimizes the trace of the Hessian.

\paragraph{Hessian-Vector Products}
In order to estimate the trace, we will be using Hessian-vector products on the form \( H v \), where \( H = \nabla^2 \mathcal{L}(\theta) \) is the Hessian of the loss \( \mathcal{L} \) with respect to model parameters \( \theta \), and \( v \in \mathbb{R}^d \) is an arbitrary vector. While explicitly forming \( H \) would require \( O(d^2) \) memory and computation, such products can be computed efficiently using reverse-mode automatic differentiation (also known as Pearlmutter’s trick) in \( O(d) \) time and memory \citep{pearlmutter1994fast}.

Given a scalar-valued function \( \mathcal{L}(\theta) \), the Hessian-vector product \( H v \) is defined as:
\[
H v = \nabla^2 \mathcal{L}(\theta) \, v = \left. \frac{d}{d\epsilon} \nabla \mathcal{L}(\theta + \epsilon v) \right|_{\epsilon=0}.
\]
This formulation allows for efficient computation using automatic differentiation frameworks, without explicitly constructing the Hessian matrix.

\paragraph{Trace Estimation via Hutchinson's Method}
To estimate the trace of the Hessian, we employ Hutchinson’s stochastic trace estimator \citep{hutchinson1989stochastic}, which approximates the trace of a matrix \( H \) as
\[
\operatorname{tr}(H) \approx \frac{1}{m} \sum_{i=1}^m z_i^\top H z_i,
\]
where each \( z_i \in \mathbb{R}^d \) is a random vector with zero-mean, unit-variance, i.i.d. entries. The choice of distribution for \( z_i\)'s affects the variance of our estimator. While both Gaussian and Rademacher distributions satisfy this, Rademacher vectors (each entry sampled from \(\{-1, +1\}\) with equal probability) lead to a lower-variance estimator. This is formally shown in \citep{avron2011randomized}. This makes it especially well-suited for estimating curvature efficiently in high-dimensional models.

We use \( m = 200 \) Rademacher vectors to produce a stable estimate. Each Hessian-vector product \( H z_i \) is computed efficiently using reverse-mode automatic differentiation without explicitly forming the Hessian matrix.

\paragraph{Choice of Evaluation Points}
Since both trace and eigenvalue estimators are noisy and computationally expensive, we evaluate them only at selected points during training. Specifically, we choose points that correspond to monotonically increasing accuracy on the test set.

\section{Loss Surface Visualization}\label{appendix: loss surface details}

Visualizing the geometry of the loss landscape provides insights into the optimization dynamics during training and the discretization gap. We follow the same procedure as in Li. et al 2018 \citep{liVisualizingLossLandscape2018}, which constructs a two-dimensional slice of the high-dimensional loss surface by perturbing model weights in random directions.

\paragraph{Methodology}

Let \( \theta \in \mathbb{R}^d \) be a parameter vector of a trained model. The goal is to evaluate the loss \( \mathcal{L}(\theta') \) over a grid of points 

\[
\theta'(\alpha, \beta) = \theta + \alpha d_1 + \beta d_2
\]

where \( d_1 \) and \( d_1 \) are orthogonal directions with \( \alpha, \beta \in \mathbb{R}\). We choose \( \alpha, \beta \) such that we probe the model in a unit circle, i.e. \( (-1, 1) \).

\paragraph{Direction Sampling}

The direction vectors \(d_1\) and \(d_2\) are generated as follows: Each direction is drawn using a Gaussian distribution \(d_i \sim \mathcal{N}(0,1)^d \). We then normalize \(d_i = d_i / \| d_i \|\), such that each perturbation has the same overall scale.

\paragraph{Orthogonalization} To ensure that \(d_1, d_2\) span a meaningful plane, we apply Gram-Schmidt orthogonalization:
\[
d_1 \leftarrow d_2 - \frac{\langle d_1, d_2 \rangle}{ \langle d_1, d_1\rangle } d_1
\]
This ensures that we span independent, meaningful axes for visualization. This allwos us to visualize the curvature in three-dimensions.

\section{Experimental Configuration}

\subsection{Hyperparameters}\label{appendix: hyperparameters}

\paragraph{Gap Scaling with Depth} we use the same hyperparameters for this experiment for DLGNs and Gumbel LGNs. Specifically, we fix the width of the network to \(256K\) neurons and train the network over the depths \(  \{6, 8, 10, 12 \} \). We optimize the network using Adam with a learning rate of \(0.01\). Furthermore, the batch size is set to \(128\) and the final parameter in the GroupSum is set to \(1/0.01\). This mirrors the original experimental setup of \citet{NEURIPS2022_0d3496dd} for CIFAR-10. Finally, we fix the \( \tau = 1 \) parameter for the Gumbel noise in the Gumbel LGNs.

\begin{table}[H]
  \centering
  \footnotesize
  \begin{tabular}{@{}ll@{}}
    \toprule
    \multicolumn{2}{c}{\textbf{A. Default CIFAR-10 Training}} \\
    \midrule
    Optimizer                     & Adam \\
    Learning rate                 & 0.01 \\
    Batch size                    & 128 \\
    Depth                         & 12 (unless varied) \\
    Width                         & 256\,k neurons \\
    GroupSum scale                & 1/0.01 \\
    \addlinespace
    \multicolumn{2}{c}{\textbf{B. Gap-Scaling (Depth Ablation)}} \\
    \midrule
    Depths tested                 & \{6, 8, 10, 12\} \\
    Width                         & 256k \\
    Other settings                & same as (A) \\
    \addlinespace
    \multicolumn{2}{c}{\textbf{C. Ablation Studies}} \\
    \midrule
    Straight-Through vs Soft      & depth=12, width=256k, tau=1.0 \\
    Temperature sweep             & tau in \{0.01,0.05,0.1,0.15,0.2,0.25,0.5,1,2\} \\
    \addlinespace
    \multicolumn{2}{c}{\textbf{D. Hessian Estimation}} \\
    \midrule
    Trace estimator               & Hutchinson, m=200 Rademacher vectors \\
    Top-eigenvalue estimator      & Power iteration, 200 iterations \\
    Evaluation points             & checkpoints at monotonic test accuracy \\
    \bottomrule
  \end{tabular}
  \caption{All hyperparameter settings, grouped by experiment.}
  \label{tab:appendix-hparams}
\end{table}

\paragraph{Ablation Studies}
For both the straight-through and \( \tau \)-parameter ablations, we use the deepest model from the gap scaling experiment (depth \( 12 \), width \( 256K\)) to evaluate each effect in a stress-tested regime. This allows us to isolate the effect of either variant.

\subsection{Implementation}
Our code extends the official PyTorch \texttt{Difflogic} library by Felix Petersen, i.e., the reference implementation provided alongside the Differentiable LGN paper \citep{NEURIPS2022_0d3496dd}. During the forward pass we replace the standard \verb|torch.nn.functional.softmax | with a hard Gumbel-Softmax, \verb|torch.nn.functional.gumbel_softmax |, thereby enabling discrete sampling while maintaining end-to-end differentiability.

\section{Softmax DiffLogic Performance on MNIST-like Baselines}\label{appendix: mnist like data performance}
For all datasets, we use the same model and experiment configs.\footnote{Thus, the CIFAR numbers are not representative of the optimal performance.} The models have six layers and a width of 64k. See \Cref{tab: mnist like data performance} for the results. 

Besides the classic MNIST \citep{bottouComparisonClassifierMethods1994, lecunGradientbasedLearningApplied1998}, CIFAR-10, and CIFAR-100 datasets \citep{krizhevsky2009learning}, we also evaluate EMNIST (balanced and letters) \citep{cohenEMNISTExtensionMNIST2017}, FashionMNIST \citep{xiaoFashionMNISTNovelImage2017}, KMNIST \citep{clanuwatDeepLearningClassical2018}, and QMNIST \citep{yadavColdCaseLost2019}. These are black and white images, as in MNIST, but with other or more classes. We refer the reader to the original papers for details and examples.

\begin{table}[H]
    \caption{The table shows the performance of DLGNs on many datasets. The key takeaway is that the discretization gap for MNIST-like datasets is minimal. The discretization gap is the difference between the discrete and soft performance. The numbers are averaged over five runs.}
    \label{tab: mnist like data performance}
    \centering
    \begin{tabular}{l|ccc|ccc}
        \toprule
         & \multicolumn{3}{c|}{Train} & \multicolumn{3}{c}{Test}  \\
         & \multicolumn{2}{c}{Accuracy} & & \multicolumn{2}{c}{Accuracy} &  \\
        Dataset & Soft & Discrete & Disc. gap & Soft & Discrete & Disc. gap \\
        \midrule
        CIFAR-10 \citep{krizhevsky2009learning} & 100.0 \% & 100.0 \% & 0.0 \% & 52.04 \% & 50.72 \% & 1.31 \% \\
        CIFAR-100 \citep{krizhevsky2009learning} & 83.44 \% & 80.39 \% & 3.05 \% & 23.86 \% & 23.1 \% & 0.76 \% \\
        EMNIST balanced \citep{cohenEMNISTExtensionMNIST2017} & 95.52 \% & 94.87 \% & 0.65 \% & 84.57 \% & 84.28 \% & 0.29 \% \\
        EMNIST letters \citep{cohenEMNISTExtensionMNIST2017} & 98.69 \% & 98.3 \% & 0.38 \% & 91.43 \% & 91.04 \% & 0.38 \% \\
        FashionMNIST \citep{xiaoFashionMNISTNovelImage2017} & 99.02 \% & 98.17 \% & 0.85 \% & 90.37 \% & 90.0 \% & 0.36 \% \\
        KMNIST \citep{clanuwatDeepLearningClassical2018} & 100.0 \% & 100.0 \% & 0.0 \% & 97.14 \% & 97.0 \% & 0.14 \% \\
        MNIST \citep{bottouComparisonClassifierMethods1994} & 100.0 \% & 100.0 \% & 0.0 \% & 98.33 \% & 98.16 \% & 0.17 \% \\
        QMNIST \citep{yadavColdCaseLost2019} & 100.0 \% & 100.0 \% & 0.0 \% & 98.33 \% & 98.17 \% & 0.16 \% \\
        \bottomrule
    \end{tabular}
\end{table}

\section{Expected Entropy}\label{appendix: expected entropy}
\begin{lemma}
    The expected entropy of a newly initialized neuron in a Differentiable LGN is $\approx\log 16 - \frac{1}{2}\approx2.27$.
\end{lemma}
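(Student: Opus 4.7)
The plan is to exploit the softmax identity $\log\pi_i = z_i - \log Z$ with $Z=\sum_j e^{z_j}$, which rewrites the entropy as
$H(\pi) = \log Z - \sum_{i=1}^{16}\pi_i z_i.$
This reduces the claim to estimating the two expectations $\mathbb{E}[\log Z]$ and $\mathbb{E}[\sum_i\pi_i z_i]$ under the initialization $\mathbf{z}\sim\mathcal{N}(0,1)^{16}$, which can then be added to recover $\mathbb{E}[H]$.

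First I would handle $\mathbb{E}[\log Z]$ via a leading-order Jensen-style approximation, $\mathbb{E}[\log Z]\approx\log\mathbb{E}[Z]$. Since the Gaussian moment-generating function gives $\mathbb{E}[e^{z_i}]=e^{1/2}$, linearity yields $\mathbb{E}[Z]=16\,e^{1/2}$, and hence $\mathbb{E}[\log Z]\approx\log 16+\tfrac{1}{2}$. Second, I would Taylor-expand $\pi_i$ about $\mathbf{z}=0$ where $\pi_i=\tfrac{1}{16}$ uniformly. Using $\partial\pi_i/\partial z_j\big|_{0} = \tfrac{1}{16}(\delta_{ij}-\tfrac{1}{16})$, a first-order expansion gives $\pi_i\approx \tfrac{1}{16}+\tfrac{1}{16}(z_i-\bar z)$ with $\bar z = \tfrac{1}{16}\sum_k z_k$. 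Substituting and simplifying, $\sum_i\pi_i z_i \approx \bar z + \tfrac{1}{16}\sum_i z_i^2 - \bar z^2$, and taking expectation with $\mathbb{E}[z_i]=0$, $\mathbb{E}[z_i^2]=1$, $\mathbb{E}[\bar z^2]=\tfrac{1}{16}$ yields $\mathbb{E}[\sum_i \pi_i z_i]\approx 1-\tfrac{1}{16}\approx 1$. Combining the two estimates, $\mathbb{E}[H]\approx \log 16+\tfrac{1}{2}-1 = \log 16-\tfrac{1}{2}\approx 2.27$, which matches the claim.

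The main obstacle is that for $n=16$ the leading-order approximations each carry $O(1/n)$ remainders that are not obviously negligible: the Jensen step leaves a correction of order $\mathrm{Var}(Z)/\mathbb{E}[Z]^2 = (e-1)/n$, and the first-order Taylor truncation leaves an $O(1/n)$ residual in the second expectation. A cleaner justification would either push the expansion of $\log Z$ to second order explicitly or invoke Stein's identity $\mathbb{E}[z_i\pi_i]=\mathbb{E}[\pi_i(1-\pi_i)]$ to rewrite the second expectation exactly as $1-\mathbb{E}[\sum_i\pi_i^2]$ and then bound the collision probability. Fortunately, these $O(1/n)$ corrections enter $H$ with opposite signs and partly cancel, which is why the naive leading-order estimate $\log 16-\tfrac{1}{2}$ still suffices for the stated $\approx 2.27$.
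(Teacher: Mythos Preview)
Your decomposition $H(\pi)=\log Z-\sum_i\pi_i z_i$ is exactly what the paper uses, and your treatment of $\mathbb{E}[\log Z]$ is the same as theirs: the paper writes $\mathbb{E}[\log Z]=\log n+\mathbb{E}[\log(\tfrac1n\sum_j e^{z_j})]$ and lets the empirical mean tend to $\mathbb{E}[e^{z}]=\sqrt{e}$ by LLN, which is precisely your Jensen step $\mathbb{E}[\log Z]\approx\log\mathbb{E}[Z]=\log 16+\tfrac12$.

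The difference is in the second term. The paper does \emph{not} Taylor-expand $\pi_i$ about the uniform point; instead it writes $\sum_i\pi_i z_i=\dfrac{\tfrac1n\sum_i z_i e^{z_i}}{\tfrac1n\sum_j e^{z_j}}$ and applies the law of large numbers to numerator and denominator separately, getting $\mathbb{E}[z e^{z}]/\mathbb{E}[e^{z}]=\sqrt{e}/\sqrt{e}=1$ directly. That ratio-of-moments trick lands exactly on $1$ in the limit, whereas your first-order expansion gives $1-\tfrac1{16}$ and you then round. Both are heuristic large-$n$ arguments evaluated at $n=16$, so neither is rigorous about the $O(1/n)$ remainders you flag, but the paper's route is slightly cleaner because it avoids the extra approximation step and only needs the two Gaussian moments $\mathbb{E}[e^{z}]$ and $\mathbb{E}[z e^{z}]$. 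Your Stein's-identity remark $\mathbb{E}[\sum_i z_i\pi_i]=1-\mathbb{E}[\sum_i\pi_i^2]$ is a genuine improvement over both, since it is exact and reduces the question to the collision probability; the paper does not mention it.
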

\begin{proof}
    A neuron has $n=16$ gates that it makes a choice over and gate $i$ has (i.i.d.) logit $z_i\sim N(0,1)$ and probability $p_i=\frac{\exp z_i}{C}$ where $C=\sum_{j=1}^n\exp z_j$. For the rest of the proof, we assume the number of gates $n$ is not fixed, and show that the expected entropy converges to $\log n-\frac{1}{2}$.

    The expected entropy of $p=(p_1,p_2,...)$ is
    \begin{align*}
        \mathbb{E}[H(p)] &= -\mathbb{E}\left[ \sum_{i=1}^n \frac{\exp z_i}{C} \log\frac{\exp z_i}{C} \right] \\
        &= \mathbb{E}\left[ \log C \right]-\mathbb{E}\left[ \sum_{i=1}^n \frac{z_i\exp z_i}{C} \right] \\
        &= \mathbb{E}\left[ \log \sum_{j=1}^n\exp z_j \right]-\mathbb{E}\left[ \sum_{i=1}^n \frac{z_i\exp z_i}{\sum_{j=1}^n\exp z_j} \right].
    \end{align*}
    Here, we have as $n\rightarrow\infty$:
    \begin{align*}
        \mathbb{E}\left[ \sum_{i=1}^n \frac{z_i\exp z_i}{\sum_{j=1}^n\exp z_j} \right]=\mathbb{E}\left[  \frac{\frac{1}{n}\sum_{i=1}^n z_i\exp z_i}{\frac{1}{n}\sum_{j=1}^n\exp z_j} \right]=\frac{\mathbb{E}[z_i\exp z_i]}{\mathbb{E}[\exp z_j]}.
    \end{align*}
    Using $\mathbb{E}[\exp z_1] = \sqrt{e}$ and $\mathbb{E}[z_1\exp z_1] = \sqrt{e}$, the above gives us that as $n\rightarrow\infty$:
    \begin{align*}
        &\mathbb{E}\left[ \log \sum_{j=1}^n\exp z_j \right]-\mathbb{E}\left[ \sum_{i=1}^n \frac{z_i\exp z_i}{\sum_{j=1}^n\exp z_j} \right] \\
        &= \log n + \mathbb{E}\left[ \log \left(\frac{1}{n}\sum_{j=1}^n\exp z_j\right) \right] - \frac{\mathbb{E}[z_i\exp z_i]}{\mathbb{E}[\exp z_j]} \\
        &= \log n + \log \mathbb{E}[\exp z_j] - \frac{\mathbb{E}[z_i\exp z_i]}{\mathbb{E}[\exp z_j]} = \log n + \frac{1}{2} - 1 = \log n - \frac{1}{2}.
    \end{align*}
    We only need to plug in $n=16$ to get the last part.
\end{proof}

\section{Runtime}\label{appendix: runtime}
In \Cref{tab: timings} the number of iterations per hour Softmax and Gumbel LGNs completed while training models for the results in \Cref{fig: cifar10 results plots}. We also calculate the relative difference between the two. Gumbel is slightly slower due to the noise sampling; however, as it converges much faster in terms of iterations, the net effect is that it converges $4.5$ times faster in wall-clock time. 
\begin{table}[t]
   \caption{Iterations per hour and the relative change from DLGNs to Gumbel LGNs.}
    \label{tab: timings}
    \centering
    \footnotesize
    \begin{tabular}{lccc}
        \toprule
        & \multicolumn{2}{c}{Iterations per hour} & \\
         & Softmax & Gumbel & Change \\
        \midrule
        Depth 6 & 38708 & 38375 & -0.86\% \\
        Depth 8 & 36292 & 34000 & -6.31\% \\
        Depth 10 & 32792 & 31167 & -4.96\% \\
        Depth 12 & 29417 & 27583 & -6.23\% \\
        \midrule
        Mean &  &  & -4.59\% \\
    \bottomrule
    \end{tabular}
\end{table}

\section{Computational Resources}\label{appendix: computational resources}
The experiments were done on an internal cluster with RTX 3090s and RTX 2080 Tis. In total, we have logged 1284 GPU hours for the experiments and testing. A significant part of the compute was spent on exploration.

\end{document}